\title{Compatible extensions and consistent closures: a fuzzy approach}
\author{Irina Georgescu \\ \footnotesize Academy of Economic Studies\\ \footnotesize Department of Economic Informatics and Cybernetics\\ \footnotesize Pia$\c{t}$a Romana No 6  R 70167, Oficiul Postal 22, Bucharest, Romania
 \footnotesize Email: irina.georgescu@csie.ase.ro}
\date{}
\begin{document}
\maketitle

\begin{abstract}
In this paper $\ast$--compatible extensions of fuzzy relations are studied, generalizing some results obtained by Duggan in case of crisp relations. From this general result are obtained as particular cases fuzzy versions of some important extension theorems for crisp relations (Szpilrajn, Hansson, Suzumura). Two
 notions of consistent closure of a fuzzy relation are introduced.

\end{abstract}

\textbf{Keywords}: compatible extension, fuzzy consistency, fuzzy relation, consistent closure

\newtheorem{definitie}{Definition}[section]
\newtheorem{propozitie}[definitie]{Proposition}
\newtheorem{remarca}[definitie]{Remark}
\newtheorem{exemplu}[definitie]{Example}
\newtheorem{intrebare}[definitie]{Open question}
\newtheorem{lema}[definitie]{Lemma}
\newtheorem{teorema}[definitie]{Theorem}
\newtheorem{corolar}[definitie]{Corollary}

\newenvironment{proof}{\noindent\textbf{Proof.}}{\hfill\rule{2mm}{2mm}\vspace*{5mm}}

\section{Introduction}

In his seminal paper \cite{szpilrajn} Szpilrajn proved an extension theorem of a strict partial order to a total strict order. This result led afterwards to other extension theorems \cite{bossert1}, \cite{bossert2}, \cite{dushnik}, \cite{suzumura1}. Szpilrajn's theorem and other extension theorems derived from economics
have been intensely used in the classical choice theory \cite{richter}, \cite{suzumura2}, \cite{weymark}.

As known, rationality is one of the main attributes which are taken into account in the analysis of economic decisions. An agent chooses between different alternatives; the rationality of the act of choice is defined by a preference relation acting on the alternatives. According to the context, the preference relation must satisfy some properties. Among these, transitivity acts extremely prolifically in classical consumer theory \cite{richter}, \cite{suzumura1}, \cite{suzumura2}. Still there exist situations when transitivity is a too strong condition. The notion of consistency was introduced by Suzumura in \cite{suzumura1} in order to weaken transitivity.

In choice theory two types of extension relations appear. The first one represents merely the inclusion: "relation $Q$ extends relation $R$" means that $R \subseteq Q$. If we interpret a relation as expressing the preference between alternatives, this type of extension preserves the preference. The second type of extension is determined by two inclusions: "relation $Q$ extends relation $R$" if $R \subseteq Q$ and $P_R \subseteq P_Q$ (here, $P_R$, $P_Q$ are the asymmetric parts of $R$, $Q$). Since the asymmetric part $P_R$ is interpreted as the strict preference, it follows that these extensions preserve the preference and the strict preference. For the second type extensions, Duggan \cite{duggan1} uses the notion "compatible extensions". Suzumura's theorem \cite{suzumura1} links the consistency with the compatible extensions: a relation is consistent iff it admits a total and transitive compatible extension.

In \cite{duggan1} Duggan establishes a  very general result which encompasses the theorems of Szpilrajn \cite{szpilrajn}, Hansson \cite{hansson}, Suzumura \cite{suzumura1}, Dushnik--Miller \cite{dushnik} and Donaldson--Weymark \cite{donaldson}  and other new particular cases.
Duggan's theorem provides total compatible extensions for various classes of crisp relations.

Paper \cite{georgescu2} has been an attempt to obtain for fuzzy relations a result similar to Duggan's extension theorem. In the setting of fuzzy sets theory associated with the G\"{o}del t-norm \cite{hajek} a concept of compatible extension of a fuzzy relation and one of consistent fuzzy relation (transitive--consistent in the terminology of \cite{bossert2}, \cite{georgescu2}) have been defined. The main result of \cite{georgescu2} is an extension theorem for fuzzy relations which generalizes  Proposition 9 from \cite{duggan1}. As particular cases fuzzy versions of Szpilrajn, Hansson and Suzumura theorems have been obtained.

In defining the meaning of compatible extension of a fuzzy relation $R$ the asymmetric part of $R$, $P_R$ comes in, notion in whose expression appears the negation corresponding to the G\"{o}del t-norm. Since this negation takes only the values $0$ and $1$, an important part of the information on the fuzzy preference relation $R$ (therefore on the act of choice) is lost.

This paper aims to define a concept of compatible extension which should avoid the negation.
One starts from the remark that, in the classical case, the definition of the compatible extension and of the consistency can be done in terms of
the Boolean implication (see Lemma 4.2). This fact suggests the definition of the $\ast$--compatible extension,
concept which is expressed using only the residuum $\rightarrow$ associated with a left--continuous t--norm $\ast$ (see \cite{hajek}, \cite{klement}). Accordingly, we shall also obtain a notion of $\ast$--consistency.

Once done this step, this idea is to redo in a fuzzy context the proof of Duggan's theorem, by essentially using the structure of residuated lattice of the interval $[0, 1]$.

In Section $2$ the main extension theorems are recalled: Szpilrajn, Hansson, Suzumura, Dushnik-Miller and Donaldson--Weymark and also
Duggan's General Extension Theorem. It is separately stated (Theorem 2.4) that part of Duggan's results
which constitutes the generalization of the theorems of Szpilrajn, Hansson and Suzumura. Section $3$ presents some definitions and properties of left--continuous t--norms, the associated residuation structure and the fuzzy relations.

Section $4$ is the core of the paper. For a left--continuous t--norm $\ast$  we define the concepts of $\ast$--compatible extension and $\ast$--consistent fuzzy relation. If $\ast$ is the G\"{o}del t-norm $\wedge$, then any $\wedge$--compatible extension is compatible (in the sense of \cite{georgescu2}) but the converse does not hold. The main results are two extension theorems. Both are fuzzy extensions of Theorem 2.4.
The first one is stated in the framework offered by a left--continuous t--norm. One proved that if the $\ast$--compatibility of the fuzzy relations is "transitive" then the fuzzy version of Theorem 2.4 is valid. For the G\"{o}del t-norm, this property is verified. One obtains then the second extension theorem, which is exactly the fuzzy form of Theorem 2.4 for the G\"{o}del t-norm.

Section $5$ concerns some particular cases of extension theorems. In Section $6$ to each fuzzy relation $R$ on $X$ one associates two fuzzy relations $R^\Delta$ and $R^\nabla$ \footnote{$R^\nabla$ has been defined in
\cite{chaudhari}.} . $R^\Delta$ and $R^\nabla$ are in general different, but both extend the notion of consistent closure of a crisp relation \cite{bossert1}. For the G\"{o}del t-norm we have $R^\Delta=R^\nabla$ and this is the smallest $\wedge$--consistent fuzzy relation on $X$ including $R$.

\section{Classical extension theorems}

In this section the extension theorems of Szpilrajn, Hansson, Suzumura, Dushnik-Miller and Donaldson--Weymark are recalled. The extension theorem of Duggan is a very general result which encompasses all these theorems.

Consider a universe $X$ of alternatives and a (crisp) binary relation $R$ on $X$. The binary relation $P_R=\{(x, y)|(x, y)\in R$ and $(y, x) \notin R\}$ is called the {\emph{asymmetric part}} of $R$. We denote by $T(R)$ the {\emph{transitive closure}} of $R$ (the smallest transitive relation containing $R$). A {\emph{preorder}} (resp. {\emph{weak order}}) on $X$ is a reflexive and transitive (resp. reflexive, transitive and total) relation on $X$. If a relation $R$ on $X$ is irreflexive and transitive then  it is called a {\emph{strict partial order}} on $X$; a strict partial order is called {\emph{strict total order}} if it is total.

Let $R, Q$ be two relations on $X$. $Q$ is called an {\emph{extension}} (resp. a {\emph{compatible extension}}) of $R$ if $R \subseteq Q$ (resp. $R \subseteq Q$ and $P_R \subseteq P_Q$). $R$ is a {\emph{consistent}} relation if $T(R)$ is a compatible extension of $R$, i.e. $P_R \subseteq P_{T(R)}$. Of course a transitive relation is consistent but the converse assertion is not true.

\begin{teorema}\cite{szpilrajn}
(Szpilrajn) Any strict partial order on $X$ can be extended to a strict total order on $X$.
\end{teorema}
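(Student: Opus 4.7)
The plan is to prove Szpilrajn's theorem by a standard Zorn's lemma argument on the poset of strict partial orders extending $R$, ordered by inclusion. Let $\mathcal{P}$ denote this poset. My first step is to verify that $\mathcal{P}$ is non-empty (it contains $R$ itself) and that every chain in $\mathcal{P}$ admits an upper bound, so that Zorn's lemma produces a maximal element. The second step is to show that any such maximal element must in fact be total, which yields the desired strict total order.

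For chain-completeness, given a chain $\{Q_\alpha\}_{\alpha \in A}$ in $\mathcal{P}$, I would consider the union $Q = \bigcup_{\alpha} Q_\alpha$. Clearly $R \subseteq Q$. Irreflexivity is immediate since no $Q_\alpha$ contains a diagonal element. Transitivity follows from the chain property: whenever $(x,y), (y,z) \in Q$, both belong to some single $Q_\alpha$ (the larger of the two indices), hence $(x,z) \in Q_\alpha \subseteq Q$. So $Q$ is a strict partial order extending each $Q_\alpha$, hence an upper bound.

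The main step is showing maximality forces totality. Let $Q$ be maximal in $\mathcal{P}$ and suppose, for contradiction, that $Q$ is not total: there exist distinct $x, y \in X$ with $(x,y) \notin Q$ and $(y,x) \notin Q$. Form $Q' = Q \cup \{(x,y)\}$ and let $Q'' = T(Q')$ be its transitive closure. Then $Q''$ is transitive, extends $R$, and strictly contains $Q$. To contradict maximality it suffices to show $Q''$ is irreflexive, and this is the main obstacle. Concretely, $Q''$ consists of all pairs $(u,v)$ linked by a finite $Q'$-chain. If $(a,a) \in Q''$ for some $a$, then since $Q$ itself is transitive and irreflexive, the closing chain must use the new edge $(x,y)$ at least once; using transitivity of $Q$ to collapse the $Q$-segments on either side of this edge, I obtain $(y,u) \in Q \cup \{(y,y)\}$ and $(u,x) \in Q \cup \{(x,x)\}$ structures that yield $(y,x) \in Q$, contradicting the choice of $x,y$. (Slightly more carefully: if the chain is $a = a_0, a_1, \ldots, a_n = a$ with some consecutive step equal to $(x,y)$, then I split at that step and use transitivity of $Q$ on each side to collapse the chain to $(y, x) \in Q$.)

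Therefore $Q''$ is irreflexive, hence lies in $\mathcal{P}$ and strictly contains $Q$, contradicting maximality. Consequently every maximal $Q$ is total, giving a strict total order on $X$ extending $R$. I expect the only delicate point to be the case analysis for the irreflexivity of the transitive closure, since one must track what happens when the inserted pair $(x,y)$ appears multiple times (or at the endpoints) in a putative cycle; but in each subcase transitivity of $Q$ forces $(y,x) \in Q$, which is the desired contradiction.
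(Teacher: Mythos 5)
Your proof is correct. Note that the paper does not prove this statement at all: Theorem 2.1 is recalled as a classical result with only a citation to Szpilrajn, so there is no in-paper argument to compare against. Your Zorn's-lemma argument is the standard one, and the one delicate point you flag --- irreflexivity of $T(Q\cup\{(x,y)\})$ --- is handled adequately: between two consecutive uses of the inserted arc $(x,y)$ in a putative cycle (or around the whole cycle if it is used once), the remaining steps lie in $Q$ and form a nonempty path from $y$ to $x$ (nonempty because $x\neq y$), so transitivity of $Q$ yields $(y,x)\in Q$, the desired contradiction. It is worth observing that your argument is exactly the crisp skeleton that the paper later fuzzifies: the chain/upper-bound step corresponds to the ``closed upward'' condition (Lemmas 4.13--4.15), and your key step that adjoining $(x,y)$ and taking the transitive closure stays in the class and remains compatible corresponds to arc-receptivity together with Proposition 4.17, all assembled by Zorn's lemma in Theorem 4.18.
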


\begin{teorema}\cite{hansson}
(Hansson) Any preorder on $X$ can be extended to a total preorder on $X$.
\end{teorema}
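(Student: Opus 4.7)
The plan is to prove Hansson's theorem by a Zorn's lemma argument on the poset of preorders extending $R$, mimicking the standard Szpilrajn-style maximality construction but with reflexivity carried along.

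First I would fix a preorder $R$ on $X$ and let $\mathcal{P}$ denote the collection of all preorders on $X$ that contain $R$, ordered by inclusion. This set is nonempty since $R\in\mathcal{P}$, and any chain $\{Q_i\}_{i\in I}\subseteq\mathcal{P}$ has the union $\bigcup_{i\in I}Q_i$ as an upper bound: reflexivity is trivially preserved under arbitrary unions, and transitivity is preserved under unions of chains because any two related pairs $(x,y),(y,z)$ in the union lie in a common $Q_i$ by the chain property. By Zorn's lemma, $\mathcal{P}$ has a maximal element $Q^{*}$.

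Next I would show $Q^{*}$ is total by contradiction. Suppose $a,b\in X$ are incomparable under $Q^{*}$, i.e., neither $(a,b)$ nor $(b,a)$ belongs to $Q^{*}$. Define
\[
Q' \;=\; Q^{*}\cup\{(x,y)\in X\times X : (x,a)\in Q^{*}\text{ and }(b,y)\in Q^{*}\}.
\]
Since $Q^{*}$ is reflexive, $(a,a),(b,b)\in Q^{*}$, so $(a,b)\in Q'\setminus Q^{*}$. It remains to verify $Q'\in\mathcal{P}$: clearly $R\subseteq Q^{*}\subseteq Q'$ and $Q'$ is reflexive. For transitivity, given $(x,y),(y,z)\in Q'$, I would split into four cases according to whether each pair lies in $Q^{*}$ or in the added set, and in every case use transitivity of $Q^{*}$ to conclude either $(x,z)\in Q^{*}$ or that $(x,a),(b,z)\in Q^{*}$, hence $(x,z)\in Q'$. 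This contradicts maximality of $Q^{*}$, so $Q^{*}$ must be total, and it is the desired total preorder extending $R$.

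The main obstacle is the transitivity verification for $Q'$: the case where both $(x,y)$ and $(y,z)$ lie in the newly added set requires using the hypotheses $(x,a),(b,y),(y,a),(b,z)\in Q^{*}$ and spotting that the conclusion $(x,z)\in Q'$ follows directly from $(x,a)$ and $(b,z)$ being in $Q^{*}$, without needing to compare $a$ and $b$ under $Q^{*}$. This small observation is what makes the extension step go through despite the incomparability of $a$ and $b$, and is exactly the template I would expect to generalize in the fuzzy setting later in the paper.
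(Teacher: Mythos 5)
The paper does not actually prove this statement: Theorem 2.2 is quoted from Hansson's paper as a known classical result, and the paper's intended route to it is as a special case of Duggan's Theorem 2.4. So there is no in-paper proof to compare against; I can only assess your argument on its own terms and against what the theorem is meant to assert.

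Your Zorn argument and the four-case transitivity check for $Q'$ are correct as written, but they establish only the literal (and trivial) reading of the statement, namely that every preorder is contained in some total preorder. Under that reading no proof is needed: $X\times X$ is a total preorder containing $R$, and in fact $X\times X$ is the maximum of your poset $\mathcal{P}$, so Zorn's lemma forces $Q^{*}=X\times X$ and your incomparability/extension step is never invoked. The substantive content of Hansson's theorem --- and the version the paper actually relies on, see Remark 2.6 (Theorem 2.4 ``generalizes \dots\ Hansson'') and its fuzzy counterpart Theorem 5.4 --- is that the total preorder can be chosen to be a \emph{compatible} extension, i.e.\ one with $P_R\subseteq P_Q$, so that strict preferences are preserved; $X\times X$ generically destroys them. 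The repair requires a genuine extra idea, though it fits your template: take $\mathcal{P}$ to be the set of preorders $Q\supseteq R$ with $P_R\subseteq P_Q$ (still closed under unions of chains, since a pair $(v,u)$ in the union already lies in some $Q_i$), and in the totality step additionally verify that $Q'$ compatibly extends $Q^{*}$: if $(u,v)\in P_{Q^{*}}$ but $(v,u)$ were newly added to $Q'$, then $(v,a),(b,u)\in Q^{*}$, and chaining $(b,u),(u,v),(v,a)$ through transitivity of $Q^{*}$ yields $(b,a)\in Q^{*}$, contradicting the incomparability of $a$ and $b$. With that check (plus the easy fact that compatibility of extensions composes, the crisp analogue of Lemma 4.11), your argument becomes a correct proof of Hansson's theorem.
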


\begin{teorema}\cite{suzumura1}
(Suzumura) A relation on $X$ has a total, transitive and compatible extension iff it is consistent.
\end{teorema}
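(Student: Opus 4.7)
I would split the proof along the two directions of the equivalence.

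\emph{Necessity.} Suppose $Q$ is a total, transitive, compatible extension of $R$. Since $Q$ is transitive and contains $R$, one has $T(R) \subseteq Q$. To deduce $P_R \subseteq P_{T(R)}$, take $(x,y) \in P_R$: then $(x,y) \in R \subseteq T(R)$, while compatibility gives $(y,x) \notin Q$, and the inclusion $T(R) \subseteq Q$ upgrades this to $(y,x) \notin T(R)$. Hence $(x,y) \in P_{T(R)}$, so $R$ is consistent. This direction is essentially bookkeeping and needs no Zorn-type tools.

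\emph{Sufficiency.} This will be the substantive half. Let $\mathcal{F}$ denote the family of all transitive compatible extensions of $R$, ordered by inclusion. Consistency of $R$ places $T(R)$ in $\mathcal{F}$, so $\mathcal{F}$ is nonempty; since unions of chains remain transitive and compatible, Zorn's lemma yields a maximal element $Q^{\ast}$. The remaining task is to show that $Q^{\ast}$ is total. To argue by contradiction, I would pick $x,y \in X$ with $(x,y) \notin Q^{\ast}$ and $(y,x) \notin Q^{\ast}$ and set $Q^{\ast\ast} = T(Q^{\ast} \cup \{(x,y)\})$. This relation is automatically transitive, contains $R$, and strictly extends $Q^{\ast}$; a contradiction with maximality will follow once $Q^{\ast\ast}$ is shown to remain a compatible extension of $R$.

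\emph{Main obstacle.} The delicate step is verifying $P_R \subseteq P_{Q^{\ast\ast}}$. For $(a,b) \in P_R$ one must rule out $(b,a) \in Q^{\ast\ast}$. I would expand a hypothetical witness to $(b,a)$ as a chain in $Q^{\ast} \cup \{(x,y)\}$. If the new pair $(x,y)$ does not occur in the chain, then $(b,a) \in T(Q^{\ast}) = Q^{\ast}$, contradicting $(a,b) \in P_R \subseteq P_{Q^{\ast}}$. Otherwise, splitting the chain at the (first) occurrence of $(x,y)$ and combining the resulting $Q^{\ast}$-pieces with $(a,b) \in R \subseteq Q^{\ast}$ produces, by transitivity of $Q^{\ast}$, the relation $(y,x) \in Q^{\ast}$, contradicting the choice of $x,y$. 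The minor subcases $b = x$ and/or $y = a$ cause segments of the chain to collapse, but the same dichotomy---either $(b,a) \in Q^{\ast}$ or $(y,x) \in Q^{\ast}$---persists with one fewer application of transitivity, closing the argument and hence the proof.
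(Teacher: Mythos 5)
Your proof is correct in substance and, although the paper only quotes Suzumurs's theorem from the literature without proving it, your strategy is precisely the one the paper itself deploys for the fuzzy generalization in Section 4: Zorn's lemma applied to the closed-upward family of transitive compatible extensions (the role of Lemmas 4.13--4.15 in Theorem 4.18), followed by the chain-splitting argument of Proposition 4.17 to show a maximal element must be total. The only step to tighten is the case where the added pair $(x,y)$ occurs more than once in the witnessing chain for $(b,a)$: splitting only at the \emph{first} occurrence of the link $(x,y)$ leaves a tail that may still use that link, so it is not yet a $Q^{\ast}$-piece; the standard repair (the one used in Proposition 4.17) is to cut at the first occurrence of $x$ and the last occurrence of $y$ (or at the first and last occurrences of the link), so that both outer segments are genuine $Q^{\ast}$-chains and transitivity of $Q^{\ast}$, together with $(a,b)\in Q^{\ast}$, delivers $(y,x)\in Q^{\ast}$ and the contradiction you intend.
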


For any relation $R$ and $x, y \in X$ denote $R[x, y]=R \cup \{(x, y)\}$. A family
$\mathcal{R}$ of relations on $X$ is {\emph{closed upward}} if $\bigcup \mathcal{C} \in \mathcal{R}$ for any chain $\mathcal{C} \subseteq \mathcal{R}$;  $\mathcal{R}$ is said to be {\emph{arc--receptive}} if for all transitive relations $R \in \mathcal{R}$ and for all distinct $x \not= y$, $(y, x) \notin R$ implies $T(R[x, y]) \in \mathcal{R}$.

We recall two results of \cite{duggan1}.

\begin{teorema}\cite{duggan1}
(Duggan) Let $\mathcal{R}$ be a closed upward and arc--receptive class of relations on $X$. If $R$ is a transitive relation in $\mathcal{R}$ then there exists a total and transitive compatible extension of $R$ in $\mathcal{R}$.
\end{teorema}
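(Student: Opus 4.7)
The plan is to apply Zorn's lemma to the collection
\[
\mathcal{F} = \{Q \in \mathcal{R} \mid Q \text{ is transitive, } R \subseteq Q, \text{ and } P_R \subseteq P_Q\}.
\]
First I would verify that $\mathcal{F}$ is nonempty (it contains $R$) and inductive under inclusion: for a chain $\mathcal{C} \subseteq \mathcal{F}$, let $Q = \bigcup \mathcal{C}$. The closed-upward hypothesis gives $Q \in \mathcal{R}$; transitivity of $Q$ follows because any two composable edges lie together in some member of $\mathcal{C}$; and $P_R \subseteq P_Q$ holds since $(y,x) \in Q$ would put $(y,x)$ in some element of $\mathcal{C}$, contradicting $(x,y) \in P_R$ there. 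So Zorn yields a maximal $Q^* \in \mathcal{F}$.

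The objective is to show $Q^*$ is total. Arguing by contradiction, suppose there exist distinct $x \neq y$ with $(x,y), (y,x) \notin Q^*$. Since $Q^*$ is transitive and $(y,x) \notin Q^*$, arc-receptiveness provides $T(Q^*[x,y]) \in \mathcal{R}$. I would then show that $T(Q^*[x,y]) \in \mathcal{F}$ and properly contains $Q^*$ (the containment is strict because $(x,y)$ is adjoined), which contradicts maximality. Transitivity of $T(Q^*[x,y])$ is built in, and containment of $R$ is automatic, so the only non-trivial obligation is the compatibility condition $P_R \subseteq P_{T(Q^*[x,y])}$.

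The main obstacle lies precisely in this compatibility check. Since $P_R \subseteq P_{Q^*}$, it is enough to prove $P_{Q^*} \subseteq P_{T(Q^*[x,y])}$. Given $(u,v) \in P_{Q^*}$, I must exclude $(v,u) \in T(Q^*[x,y])$. Assuming such a witnessing path $v = z_0, z_1, \ldots, z_n = u$ with each $(z_i, z_{i+1}) \in Q^* \cup \{(x,y)\}$, there are two cases. If no edge equals $(x,y)$, transitivity of $Q^*$ forces $(v,u) \in Q^*$, contradicting $(u,v) \in P_{Q^*}$. Otherwise, splitting the path at the first and last occurrences of $(x,y)$ and applying transitivity of $Q^*$ to the two flanking subpaths, I would conclude that either $v = x$ or $(v,x) \in Q^*$, and either $y = u$ or $(y,u) \in Q^*$. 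Combined with $(u,v) \in Q^*$, each of the resulting four sub-cases chains into $(y,x) \in Q^*$ via transitivity, contradicting the assumption $(y,x) \notin Q^*$. This closes the compatibility step, and hence the proof.
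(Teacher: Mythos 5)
Your proof is correct; the paper itself only cites this crisp result from Duggan, but your argument (Zorn's lemma on the closed-upward family of transitive compatible extensions in $\mathcal{R}$, followed by the path-splitting analysis at the first and last occurrences of the adjoined arc $(x,y)$ to verify that $T(Q^\ast[x,y])$ remains a compatible extension) is exactly the strategy the paper deploys for its fuzzy generalization in Proposition 4.17 and Theorem 4.18. No gaps.
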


\begin{teorema}\cite{duggan1}
(Duggan's General Extension Theorem) Let $\mathcal{R}$ be a closed upward and arc--receptive class of relations on $X$. If $R$ is a consistent relation on $X$ and $T(R) \in \mathcal{R}$ then

$T(R)=\bigcap \{Q \in \mathcal{R}|Q$ is a total, transitive and compatible extension of $\mathcal{R} \}$.
\end{teorema}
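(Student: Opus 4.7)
The plan is to establish the two inclusions separately; the inclusion $T(R)\subseteq \bigcap$ is routine, while the reverse inclusion is where the hypotheses really get used. First, to confirm that the intersected family is non-empty, I would apply Theorem 2.4 directly to $T(R)\in\mathcal{R}$ (which is transitive), and use the consistency hypothesis $P_R\subseteq P_{T(R)}$ to observe that any compatible extension of $T(R)$ is automatically a compatible extension of $R$. Then, for any total transitive compatible extension $Q$ of $R$ in $\mathcal{R}$, the inclusions $R\subseteq Q$ and the transitivity of $Q$ force $T(R)\subseteq Q$ by minimality of the transitive closure, giving $T(R)\subseteq \bigcap Q$.

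For the reverse inclusion I would argue contrapositively: given $(x,y)\notin T(R)$ with $x\neq y$, construct a $Q$ in the family avoiding $(x,y)$. The pivotal move is to apply arc--receptiveness to the transitive relation $T(R)\in\mathcal{R}$ at the pair $(y,x)$, using $(x,y)\notin T(R)$, in order to place $R^{*}:=T(T(R)\cup\{(y,x)\})$ inside $\mathcal{R}$. This $R^{*}$ is transitive by construction, and the whole difficulty is to show that it is a compatible extension of $T(R)$ and that $(x,y)\notin R^{*}$.

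The main obstacle is therefore a chain lemma. Any witness of $(x,y)\in R^{*}$, or of $(b,a)\in R^{*}$ for $(a,b)\in P_{T(R)}$, is a finite sequence of arcs in $T(R)\cup\{(y,x)\}$. I would split such a sequence at its first and last occurrence of the new arc $(y,x)$; the prefix up to the first occurrence and the suffix after the last occurrence each lie entirely in $T(R)$, so by transitivity of $T(R)$ they collapse to single $T(R)$--arcs $(b,y)$ and $(x,a)$ (or the endpoints coincide). Combining these with $(a,b)\in T(R)$ via transitivity produces $(x,y)\in T(R)$ in each sub-case, contradicting the standing hypothesis. The routine but case-heavy verification that all four combinations $b=y$ or $b\neq y$ and $x=a$ or $x\neq a$ lead to the same contradiction is where the bookkeeping is heaviest.

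Having shown that $R^{*}\in\mathcal{R}$ is a transitive compatible extension of $T(R)$, and hence of $R$ by composing with consistency, and that $(y,x)\in P_{R^{*}}$, I finish by applying Theorem 2.4 to $R^{*}$ to obtain a total transitive compatible extension $Q\in\mathcal{R}$ of $R^{*}$. Compatibility composes along the tower $R\subseteq T(R)\subseteq R^{*}\subseteq Q$, so $Q$ is a total transitive compatible extension of $R$ in $\mathcal{R}$; and the inclusion $P_{R^{*}}\subseteq P_{Q}$ forces $(x,y)\notin Q$ because $(y,x)\in P_{R^{*}}$. Thus every pair outside $T(R)$ is omitted by some member of the intersected family, completing the inclusion $\bigcap Q\subseteq T(R)$.
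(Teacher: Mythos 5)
The paper does not prove this statement: Theorem 2.5 is recalled verbatim from Duggan \cite{duggan1} with only a citation (indeed, even its fuzzy analogue is listed as an open problem in Section 7), so there is no in-paper proof to compare yours against and I am judging the argument on its own terms. Most of what you write is sound and is essentially Duggan's own route: $T(R)\subseteq\bigcap$ by minimality of the transitive closure; non-emptiness of the intersected family by applying Theorem 2.4 to the transitive relation $T(R)\in\mathcal{R}$ and using consistency ($P_R\subseteq P_{T(R)}$) to push compatibility down from $T(R)$ to $R$; and, for a pair $(x,y)\notin T(R)$ with $x\neq y$, adjoining the reverse arc via arc-receptivity to get $R^{*}=T(T(R)[y,x])\in\mathcal{R}$, verifying by the first/last-occurrence splitting that $(x,y)\notin R^{*}$ and that $P_{T(R)}\subseteq P_{R^{*}}$, and totalizing with Theorem 2.4. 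The bookkeeping you describe does close all the degenerate sub-cases, and transitivity of crisp compatibility lets everything compose along $R\subseteq T(R)\subseteq R^{*}\subseteq Q$.

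The genuine gap is the diagonal. The asserted identity is an equality of relations, so you must also show that every pair $(x,x)\notin T(R)$ is omitted by some total, transitive, compatible extension of $R$ in $\mathcal{R}$; your contrapositive argument explicitly assumes $x\neq y$, and it cannot be adapted as stated, because arc-receptivity is only defined for distinct elements and the arc you would adjoin, $(y,x)$, degenerates to $(x,x)$ itself. A separate argument is needed, for instance: fix $x$ with $(x,x)\notin T(R)$ and pass to the subclass $\mathcal{R}_x=\{S\in\mathcal{R}\mid (x,x)\notin S\}$. This class is closed upward (a union of a chain avoiding $(x,x)$ avoids it) and arc-receptive: if $S\in\mathcal{R}_x$ is transitive, $a\neq b$ and $(b,a)\notin S$, then a cycle through $x$ in $S[a,b]$ either lies in $S$ (impossible since $(x,x)\notin S=T(S)$) or, by the same first/last-occurrence splitting you already use, collapses to $(x,a)$ and $(b,x)$ in $S$ (with the degenerate cases $x=a$ or $x=b$), forcing $(b,a)\in S$ --- a contradiction; hence $T(S[a,b])\in\mathcal{R}_x$. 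Since $T(R)\in\mathcal{R}_x$, Theorem 2.4 applied inside $\mathcal{R}_x$ yields a total, transitive, compatible extension $Q\in\mathcal{R}_x\subseteq\mathcal{R}$ of $T(R)$, hence of $R$, with $(x,x)\notin Q$. With this supplement your proof is complete.
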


\begin{remarca}
Theorem 2.4 is a part of the General Extension Theorem; it generalizes the theorems of Szpilrajn, Hansson and Suzumura.
\end{remarca}

The following two results are also particular cases of the General Extension Theorem.

\begin{teorema}\cite{dushnik}
(Dushnik--Miller) Any strict partial order on $X$ is the intersection of strict total orders in which it is embedded.
\end{teorema}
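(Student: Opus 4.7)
The plan is to derive the Dushnik--Miller theorem as a direct instance of Duggan's General Extension Theorem (Theorem 2.5), taking as $\mathcal{R}$ the class of all irreflexive binary relations on $X$. Under this choice, the total, transitive, compatible extensions produced by Theorem 2.5 will be precisely the strict total orders in which $R$ is embedded, so the intersection formula of Theorem 2.5 will deliver the desired conclusion.

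First I would verify the structural hypotheses on $\mathcal{R}$. Closure upward is immediate: any union of irreflexive relations is irreflexive. For arc--receptivity, suppose $R \in \mathcal{R}$ is transitive (hence a strict partial order), let $x \neq y$, and assume $(y, x) \notin R$. I would show that $T(R[x, y])$ is again irreflexive. Any loop $z \to \cdots \to z$ in $R \cup \{(x, y)\}$ cannot live entirely inside $R$, since an irreflexive transitive relation is acyclic; hence such a loop must traverse the new arc $(x, y)$, and then the remaining portion would yield a chain from $y$ to $x$ in $R$, which by transitivity gives $(y, x) \in R$, contradicting the hypothesis.

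Next I would check the hypotheses on $R$ itself: a strict partial order is transitive, so $T(R) = R$ lies in $\mathcal{R}$, and a transitive relation is trivially consistent. Theorem 2.5 therefore yields
\[
R \;=\; T(R) \;=\; \bigcap \{Q \in \mathcal{R} \mid Q \text{ is a total, transitive and compatible extension of } R \}.
\]
It remains to identify the members of this family. A total, transitive, irreflexive relation $Q$ is exactly a strict total order. Moreover, any relation obtained from an irreflexive transitive one is asymmetric, and for an asymmetric relation $S$ one has $P_S = S$; since both $R$ and such a $Q$ are asymmetric, $P_R = R$ and $P_Q = Q$, so the compatibility condition $P_R \subseteq P_Q$ collapses to the set--theoretic inclusion $R \subseteq Q$. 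Hence the family above consists of precisely the strict total orders extending $R$, which is the Dushnik--Miller statement.

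The main potential obstacle is the arc--receptivity verification, where one must control the behaviour of the transitive closure after a single arc is adjoined; once this acyclicity argument is in place, the remainder is bookkeeping -- recognising that in the crisp asymmetric setting, compatibility and set--theoretic extension coincide, and translating ``total transitive irreflexive'' into ``strict total order''.
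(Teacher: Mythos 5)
Your derivation is correct and follows exactly the route the paper indicates: the paper gives no proof of Dushnik--Miller, merely remarking that it is a particular case of Duggan's General Extension Theorem (Theorem 2.5), and your choice of $\mathcal{R}$ as the irreflexive relations, together with the acyclicity argument for arc--receptivity and the observation that compatibility collapses to inclusion for asymmetric relations, is the standard way to instantiate that remark. No gaps.
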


\begin{teorema}\cite{donaldson}
(Donaldson--Weymark) Any preorder on $X$ is the intersection of total preorders in which it is embedded.
\end{teorema}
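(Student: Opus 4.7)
The plan is to derive Donaldson--Weymark as a direct specialization of Duggan's General Extension Theorem (Theorem 2.5), in the same spirit that Theorem 2.4 recovers Szpilrajn, Hansson and Suzumura. So the task is to exhibit a class $\mathcal{R}$ of relations on $X$ such that (i) $\mathcal{R}$ is closed upward and arc--receptive, (ii) $T(R)\in\mathcal{R}$ for the given preorder $R$, and (iii) the total, transitive, compatible extensions of $R$ living in $\mathcal{R}$ are exactly the total preorders extending $R$.

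The natural choice is $\mathcal{R}:=\{Q\subseteq X\times X \mid Q\text{ is reflexive}\}$. Closed upward is immediate, since a union of a chain of reflexive relations stays reflexive. Arc--receptivity is equally easy: if $Q\in\mathcal{R}$ is transitive and $x\neq y$ with $(y,x)\notin Q$, then $Q\subseteq T(Q[x,y])$, and the latter inherits reflexivity from $Q$. So Duggan's hypotheses on $\mathcal{R}$ are met. Now let $R$ be a preorder. Being transitive, $R$ is consistent and $T(R)=R$; in particular $T(R)\in\mathcal{R}$.

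By Theorem 2.5, $R=T(R)=\bigcap\{Q\in\mathcal{R}\mid Q\text{ is a total, transitive, compatible extension of }R\}$. A relation in $\mathcal{R}$ which is in addition total and transitive is precisely a total preorder, so the displayed intersection is taken over a subfamily of the total preorders extending $R$. To finish, I would argue by sandwiching: every total preorder $Q$ with $R\subseteq Q$ (compatible or not) contains $R$, hence $R\subseteq\bigcap\{Q\text{ total preorder}, R\subseteq Q\}\subseteq\bigcap\{Q\text{ total preorder, compatible extension of }R\}=R$, so all inclusions collapse to equalities.

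There is essentially no serious obstacle: the work has already been done inside Duggan's theorem. The only point requiring any care is verifying that the relevant extensions are automatically reflexive, and reconciling the ``compatible extension'' notion coming out of Theorem 2.5 with the bare inclusion ``in which it is embedded'' appearing in the statement; the sandwich argument above handles this cleanly.
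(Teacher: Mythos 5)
Your proposal is correct and follows exactly the route the paper intends: the paper states Theorem 2.7 only as ``a particular case of the General Extension Theorem,'' and you supply the missing details by taking $\mathcal{R}$ to be the class of reflexive relations, checking closed-upwardness and arc-receptivity, and applying Theorem 2.5 to the preorder $R$ (for which $T(R)=R$). The sandwich argument correctly reconciles the compatible-extension intersection delivered by Theorem 2.5 with the plain-inclusion reading of ``embedded,'' so no gap remains.
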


\section{Preliminaries on t--norms and fuzzy relations}

In this section some definitions and basic properties on left--continuous t--norms and fuzzy relations are defined. They will be used in the next sections both to define
the formal context of the paper, and as a technical tool to prove the results.

Let $[0, 1]$ be the unit interval. For any family $\{a_i\}_{i \in I}$ of elements in $[0, 1]$, let us denote
$ \displaystyle \bigvee_{i \in I} {a_i}=\sup
\{a_i\mid i \in I\}$ and $\displaystyle \bigwedge_{i \in I}
{a_i}=\inf \{a_i\mid i \in I\}$.

In particular, for all $a, b \in [0, 1]$, $a \vee b=\sup(a, b)$ and $a \wedge b=\inf(a, b)$.

Let $\ast: [0, 1] \times [0, 1] \rightarrow [0, 1]$ be a left--continuous t-norm \cite{klement} and $\rightarrow$ its residuum:

\[
a \rightarrow b=\bigvee \{c \in [0, 1]\mid a \ast c \leq b\}.
\]

The corresponding negation $\neg$ will be defined by:

$\neg a=a \rightarrow 0=\bigvee \{c\in [0, 1]|a \ast c=0\}$.

\begin{lema}
\cite{hajek}, \cite{klement}, \cite{turunen} For any $a, b, c \in [0, 1]$ the following properties hold:
\begin{enumerate}[(1)]
\item  $ a \ast b \leq c$ iff $a \leq b \rightarrow c$;
\item  $a \ast (a \rightarrow b) \leq a \wedge b$;
\item  $a \ast b \leq a$; $a \ast b \leq b$;
\item  $b \leq a \rightarrow b$;
\item $a \leq b$ iff $a \rightarrow b=1$;
\item $a=1 \rightarrow a$;
\item $1=a \rightarrow a$;
\end{enumerate}
\end{lema}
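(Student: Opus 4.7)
The plan is to prove (1) first and then bootstrap all the remaining items from it. The forward direction of (1), namely $a \ast b \leq c \Rightarrow a \leq b \rightarrow c$, is immediate from the definition of the residuum: commutativity of the t--norm gives $b \ast a \leq c$, so $a$ lies in the set whose supremum defines $b \rightarrow c$. The converse direction is the only step where the left--continuity hypothesis on $\ast$ enters. If $a \leq b \rightarrow c = \bigvee\{d : b \ast d \leq c\}$, then by monotonicity and left--continuity of $\ast$ we get $b \ast a \leq b \ast (b \rightarrow c) = \bigvee\{b \ast d : b \ast d \leq c\} \leq c$, and commutativity gives $a \ast b \leq c$. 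This is where the only genuine work lies.

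Once (1) is available, the next cluster of items fall out with little effort. For (3), monotonicity of $\ast$ together with $1$ as neutral element gives $a \ast b \leq a \ast 1 = a$ and symmetrically $a \ast b \leq b$. For (4), I would apply (1) with the variables permuted: the inequality $b \ast a \leq b$ from (3) is, by (1), equivalent to $b \leq a \rightarrow b$. For (2), substituting $a \rightarrow b$ for the left-hand side of (1) (with $c = b$) gives $a \ast (a \rightarrow b) \leq b$; combining this with $a \ast (a \rightarrow b) \leq a$ from (3) yields the meet bound $a \ast (a \rightarrow b) \leq a \wedge b$.

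The remaining three items are purely formal. Property (5) follows from the chain $a \rightarrow b = 1 \iff 1 \leq a \rightarrow b \iff 1 \ast a \leq b \iff a \leq b$, where the middle equivalence is (1) and the last step uses that $1$ is neutral. Property (6) is computed directly from the definition: $1 \rightarrow a = \bigvee\{c : 1 \ast c \leq a\} = \bigvee\{c : c \leq a\} = a$. Finally (7) is the instance $b = a$ of (5), applied to the tautology $a \leq a$. The principal obstacle throughout is securing the reverse implication of (1), where left--continuity of $\ast$ is indispensable; once that adjunction is in place, everything else is routine bookkeeping in a residuated lattice.
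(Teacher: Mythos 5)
Your proof is correct: the adjunction (1) is established exactly where the work lies (left-continuity giving $b \ast (b\rightarrow c) = \bigvee\{b\ast d : b\ast d\le c\}\le c$), and the remaining items follow from it by routine instantiation as you describe. The paper itself offers no proof of this lemma --- it is quoted as standard from the cited references --- so there is nothing to compare against; your derivation is the usual one found in those sources and is complete.
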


\begin{lema}
\cite{hajek}, \cite{klement}, \cite{turunen} For any $\{a_i\}_{i \in I}\subseteq [0, 1]$, $\{b_i\}_{i \in I}\subseteq [0, 1]$ and $a \in [0, 1]$ the following properties hold:
\begin{enumerate}[(1)]
\item $\displaystyle (\bigvee_{i \in I} a_i)\ast a=\displaystyle (\bigvee_{i \in I} a_i \ast a)$;
\item $\displaystyle (\bigvee_{i \in I} a_i)\rightarrow  a= \bigwedge_{i \in I} (a_i \rightarrow a)$;
\item $\displaystyle (\bigvee_{i \in I} a_i) \ast \bigvee_{j \in J} b_j)=(\bigvee_{(i, j) \in I \times J} a_i \ast b_j)$.
\end{enumerate}
\end{lema}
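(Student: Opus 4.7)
The plan is to exploit the adjunction between $\ast$ and $\rightarrow$ recorded in Lemma 3.1(1)---which is essentially equivalent to the left--continuity of $\ast$---and to reduce all three statements to routine manipulations in the residuated lattice $[0,1]$.

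First I would tackle (1) by establishing two inequalities. Monotonicity of $\ast$ yields $a_i \ast a \leq (\bigvee_{j \in I} a_j) \ast a$ for every $i \in I$, hence $\bigvee_{i \in I}(a_i \ast a) \leq (\bigvee_{i \in I} a_i) \ast a$. For the reverse inequality, set $c = \bigvee_{i \in I}(a_i \ast a)$; from $a_i \ast a \leq c$ the adjunction gives $a_i \leq a \rightarrow c$ for each $i$, so $\bigvee_{i \in I} a_i \leq a \rightarrow c$, and a second application of the adjunction yields $(\bigvee_{i \in I} a_i) \ast a \leq c$, as required.

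For (2) I would argue via the universal property of the residuum. Any $c \in [0,1]$ satisfies $c \leq (\bigvee_{i \in I} a_i) \rightarrow a$ iff $(\bigvee_{i \in I} a_i) \ast c \leq a$, iff (by (1) just proved) $\bigvee_{i \in I}(a_i \ast c) \leq a$, iff $a_i \ast c \leq a$ for every $i$, iff $c \leq a_i \rightarrow a$ for every $i$, iff $c \leq \bigwedge_{i \in I}(a_i \rightarrow a)$. Hence the two sides of (2) have the same set of lower bounds in $[0,1]$ and therefore coincide. Statement (3) then follows by applying (1) twice, using commutativity of the t--norm: $(\bigvee_i a_i) \ast (\bigvee_j b_j) = \bigvee_j \bigl((\bigvee_i a_i) \ast b_j\bigr) = \bigvee_j \bigvee_i (a_i \ast b_j) = \bigvee_{(i,j) \in I \times J}(a_i \ast b_j)$.

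I do not anticipate a genuine obstacle: the whole substance is packed into left--continuity, equivalently into the adjunction of Lemma 3.1(1), and the rest is lattice--theoretic bookkeeping. The one point requiring a little care is that the second application of the adjunction inside the proof of (1) must be legitimate, which it is because the adjunction of Lemma 3.1(1) holds for \emph{arbitrary} elements of $[0,1]$, not just those in the original family.
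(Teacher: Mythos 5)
Your proof is correct. The paper itself states this lemma without proof, citing H\'ajek, Klement--Mesiar--Pap and Turunen, and your argument is exactly the standard one found in those sources: everything reduces to the adjunction of Lemma 3.1(1) (equivalently, left--continuity of $\ast$), with (1) proved by two inequalities, (2) by comparing principal down--sets, and (3) by applying (1) twice via commutativity. No gaps.
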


\begin{lema}
\cite{hajek}, \cite{klement}, \cite{turunen} For any $a, b, c \in [0, 1]$ the following properties hold:
\begin{enumerate}[(1)]
\item $a \leq \neg b$ iff $a \ast b=0$;
\item $a \ast \neg a=0$.
\end{enumerate}
\end{lema}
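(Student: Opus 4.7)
The plan is to derive both claims directly from the adjunction property Lemma 3.1(1), together with the defining identity $\neg b = b \rightarrow 0$. These are textbook residuated-lattice manipulations, so no real difficulty is expected; the task is essentially to record that each statement reduces to a one-line application of adjunction.

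For (1), I would first unfold the definition of negation: $a \leq \neg b$ means, by definition, $a \leq b \rightarrow 0$. Applying Lemma 3.1(1) with $c = 0$ converts this to $a \ast b \leq 0$. Since $\ast$ takes values in $[0, 1]$ we have $a \ast b \geq 0$, so $a \ast b \leq 0$ is equivalent to $a \ast b = 0$, and the equivalence follows.

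For (2), the quickest route is to specialize (1) by taking $\neg a$ in place of $a$ and $a$ in place of $b$. The trivial inequality $\neg a \leq \neg a$ then yields $\neg a \ast a = 0$, and commutativity of the t-norm gives $a \ast \neg a = 0$. Alternatively, one could invoke Lemma 3.1(2) with $b = 0$: it gives $a \ast (a \rightarrow 0) \leq a \wedge 0 = 0$, i.e. $a \ast \neg a = 0$ directly. Either way, the only step that uses anything beyond formal rewriting is the adjunction, so the main (and only) obstacle is to check that the residuated structure has been set up correctly, which is guaranteed by left-continuity of $\ast$.
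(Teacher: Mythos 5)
Your proof is correct: part (1) is exactly the adjunction of Lemma 3.1(1) with $c=0$ plus the definition $\neg b = b \rightarrow 0$, and part (2) follows either by specializing (1) or from Lemma 3.1(2) with $b=0$. The paper states this lemma without proof (citing the standard references), and your argument is the standard one, so there is nothing to compare against and nothing to add.
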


For the G\"{o}del t-norm $a \ast_G b =a \wedge b=\min(a, b)$, the residuum and negation will have the form:

$a \rightarrow_G b =\left\{\begin{array}{rcl}
1& \text{if } a \leq b \\
b& \text{if } a >b
\end{array}\right.$,
$\neg a=\left\{\begin{array}{rcl}
1 & \mbox{if}& a=0 \\
0 & \mbox{if}& a >0
\end{array}\right.$

For the Lukasiewicz t--norm, $a \ast_L b=\max(0, a+b-1)$, the residuum and negation will have the form:

$a \rightarrow_L b =\min(1, 1-a+b)$, $\neg a=1-a$.

For the product t--norm, $a \ast_P =ab$, the residuum and negation will have the form:

$a \rightarrow_P b=\left\{\begin{array}{rcl}
1& \text{if } a \leq b \\
\frac{b}{a}& \text{if } a >b
\end{array}\right.$,
$\neg a=\left\{\begin{array}{rcl}
1 & \mbox{if}& a=0 \\
0 & \mbox{if}& a >0
\end{array}\right.$

\begin{remarca}
The fact that the negation of the G\"{o}del t-norm takes only the values $0$ and $1$ shows that the notions and the results in which this negation appears are very close
to the crisp mathematics and insufficiently expressive for the fuzzy modelling. Therefore in the construction of the models of fuzzy uncertainty situations it is indicated
to avoid the use of the G\"{o}del t-norm.
\end{remarca}

Now we fix a left--continuous t--norm $\ast$. A {\emph{fuzzy relation}} on a universe $X$ of alternatives is a fuzzy subset of $X^2$, i.e. a function $R: X^2 \rightarrow [0,
1]$. $R$ will be interpreted as a fuzzy preference relation: for any alternatives $x, y$, $R(x, y)$
is the degree to which it is true that $x$ is at least as good as $y$. If $R, Q$ are two fuzzy relations on $X$ then $Q$ is an {\emph{extension}} of $R$ if $R \subseteq Q$. The relation $R$ on $X$ is said to be

$\bullet$ reflexive if $R(x, x)=1$ for any $x \in X$;

$\bullet$ irreflexive if $R(x, x)=0$ for any $x \in X$;

$\bullet$ $\ast$--transitive if $R(x, y)\ast R(y, z) \leq R(x, z)$ for all $x, y, z \in X$;

$\bullet$ total if $R(x, y)>0$ or $R(y, x)>0$ for all distinct $x, y \in X$;

$\bullet$ strongly total if $R(x, y)=1$ or $R(y, x)=1$ for all distinct $x, y \in X$.

The $\ast$--{\emph{transitive closure}} $T(R)$ of $R$ is the intersection of all $\ast$--transitive extensions of $R$.

\begin{remarca}
From the above definitions one can see that the $\ast$--transitivity and the notion of $\ast$--transitive closure depend on the t--norm $\ast$.
\end{remarca}

The following lemma is well-known.

\begin{lema}
If $R$ is a fuzzy relation on $X$ then for all $x, y \in X$:

$T(R)(x, y)=R(x, y) \vee \displaystyle \bigvee_{n=1}^\infty \bigvee_{t_1, \ldots, t_n \in X} R(x, t_1) \ast \ldots \ast R(t_n, y)$.
\end{lema}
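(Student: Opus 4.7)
Let $S(x,y)$ denote the right-hand side of the claimed formula, i.e.
\[
S(x,y) = R(x,y) \vee \bigvee_{n \geq 1}\,\bigvee_{t_1,\dots,t_n \in X} R(x,t_1) \ast R(t_1,t_2) \ast \cdots \ast R(t_n,y).
\]
My plan is to verify the three properties that characterise $T(R)$: (i) $R \subseteq S$; (ii) $S$ is $\ast$--transitive; (iii) $S$ is contained in every $\ast$--transitive extension $Q$ of $R$.

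Property (i) is immediate from the definition of $S$ (the first disjunct is already $R(x,y)$). For property (ii), the key tool will be Lemma 3.2(3), which lets one pull $\ast$ through arbitrary suprema: I will compute $S(x,y) \ast S(y,z)$ by distributing $\ast$ over both suprema, obtaining a supremum indexed over pairs of chains $x,t_1,\dots,t_n,y$ and $y,s_1,\dots,s_m,z$. Concatenating such a pair through the common midpoint $y$ yields a single chain $x,t_1,\dots,t_n,y,s_1,\dots,s_m,z$ from $x$ to $z$ with $n+m+1$ intermediate points, whose associated product is therefore dominated by the supremum $S(x,z)$. The four cases (whether $S(x,y)$ or $S(y,z)$ contributes its $R$ summand or a chain summand) all merge into this concatenation picture, so the required inequality $S(x,y) \ast S(y,z) \leq S(x,z)$ follows.

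For property (iii), let $Q$ be any $\ast$--transitive extension of $R$. I will show by induction on $n$ that
\[
R(x,t_1) \ast R(t_1,t_2) \ast \cdots \ast R(t_n,y) \leq Q(x,y)
\]
for all $x,t_1,\dots,t_n,y$. The base case $n=0$ is just $R(x,y) \leq Q(x,y)$, and the inductive step combines the inductive hypothesis applied to the chain $x,t_1,\dots,t_{n-1},t_n$ with the hypothesis $R(t_n,y) \leq Q(t_n,y)$, then invokes the monotonicity of $\ast$ (Lemma 3.1(3) style reasoning) and $\ast$--transitivity of $Q$ to conclude. Since each term in the defining supremum of $S(x,y)$ is bounded by $Q(x,y)$, so is the supremum itself, giving $S \subseteq Q$.

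The main point requiring care is the bookkeeping in step (ii): one must apply Lemma 3.2(3) twice (once for each factor) and correctly identify the resulting ``chain concatenation'' as an element indexing the supremum in $S(x,z)$. Once one agrees that any finite product of the form $R(u_0,u_1) \ast \cdots \ast R(u_{k-1},u_k)$ with $u_0 = x$ and $u_k = z$ is a term of that supremum, every case collapses cleanly. Left--continuity of $\ast$, embodied in Lemma 3.2, is what makes the infinite distributive identity legitimate and is therefore the only nontrivial ingredient.
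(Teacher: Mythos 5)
Your proof is correct: the paper states this lemma as well-known and gives no proof, and your argument (show the right-hand side contains $R$, is $\ast$-transitive via the infinite distributivity of Lemma 3.2 applied to concatenated chains, and sits inside every $\ast$-transitive extension by induction on chain length) is the standard and complete way to establish it, correctly identifying left-continuity of $\ast$ as the essential ingredient.
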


\section{Compatible extensions revisited}

Compatible extensions of fuzzy relations have been introduced in \cite{georgescu2} in the context offered by the G\"{o}del t-norm.
In this section we will propose a more appropriate concept of compatible extension. The notion of compatible extension will be given in the context of fuzzy sets theory associated with a left--continuous t--norm $\ast$ and will use the residuum instead of negation. For the case of the G\"{o}del t-norm one obtains a stronger notion than in \cite{georgescu2}. Two extension theorems which generalize Duggan's result (Theorem 2.4) are proved.
The first of them is formulated for a left--continuous arbitrary t--norm and the second one for the G\"{o}del t-norm. Their proof will use a series of preliminary lemmas and propositions on the $\ast$--compatible extensions which have an intrinsic interest too.

Let $(B, \vee, \wedge, \neg, 0, 1)$ be a Boolean algebra. The Boolean implication $\rightarrow$ in $B$ is defined by $a \rightarrow b=\neg a \vee b$. An easy computation shows that $\neg(a \wedge \neg b)=a \rightarrow b$ for all $a, b \in B$. We observe that this equality does not hold in the residuated lattice $([0, 1], \vee, \wedge, \rightarrow, 0, 1)$.

Let $X$ be a non--empty set. By identifying the (crisp) relations with their characteristic functions, any relation $R$ on $X$ can be written $R: X^2 \rightarrow L_2$, where $L_2$ is the Boolean algebra $\{0, 1\}$. Then the asymmetric part $P_R : X^2 \rightarrow L_2$ is given by $P_R(x, y)=R(x, y) \wedge \neg R(y, x)$ for all $x, y \in X$.

\begin{lema}
If $R: X^2 \rightarrow L_2$ is a crisp relation on $X$ then $\neg P_R(x, y)=R(x, y) \rightarrow R(y, x)$ for all $x, y \in X$.
\end{lema}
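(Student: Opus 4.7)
The plan is to reduce the lemma immediately to the Boolean identity $\neg(a \wedge \neg b) = a \to b$ that was recalled in the paragraph just preceding the statement. Since $R$ is crisp, its values live in the two-element Boolean algebra $L_2 = \{0,1\}$, so all computations on $R(x,y)$ and $R(y,x)$ take place inside $L_2$, where that identity is available; this is precisely what the remark about $[0,1]$ is flagging as the feature that does not survive fuzzification.

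Concretely, I would proceed in three short steps. First, unfold the definition of the asymmetric part given in the paragraph above the lemma, namely
\[
P_R(x,y) = R(x,y) \wedge \neg R(y,x),
\]
which makes sense pointwise in $L_2$. Second, apply $\neg$ to both sides and invoke the Boolean identity $\neg(a \wedge \neg b) = a \to b$ with $a = R(x,y)$ and $b = R(y,x)$, obtaining
\[
\neg P_R(x,y) = \neg\bigl(R(x,y) \wedge \neg R(y,x)\bigr) = R(x,y) \to R(y,x).
\]
Third, note that on $L_2$ the Boolean implication $a \to b = \neg a \vee b$ agrees with the residuum of the G\"{o}del t-norm restricted to $\{0,1\}$ (and in fact with the restriction of the residuum of any left-continuous t-norm), so the right-hand side is unambiguous.

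I do not expect any real obstacle: the whole content of the lemma is the explicit reminder that Boolean negation of a conjunction with a negated factor coincides with Boolean implication. The only thing worth being careful about is to make clear that this is a pointwise computation in the Boolean algebra $L_2$ (where crisp $R$ takes values), and not in the residuated lattice $[0,1]$; the concluding remark of the preceding paragraph emphasizes exactly this distinction and motivates why, in the fuzzy setting, a new notion of compatible extension phrased via the residuum is needed.
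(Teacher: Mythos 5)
Your proof is correct and is essentially the paper's own argument: the lemma is an immediate pointwise application of the Boolean identity $\neg(a \wedge \neg b) = a \to b$ (stated in the paragraph preceding the lemma) to the definition $P_R(x,y) = R(x,y) \wedge \neg R(y,x)$ in $L_2$. Your additional observation that the Boolean implication on $\{0,1\}$ agrees with the restriction of the residuum is a harmless and sensible clarification.
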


\begin{lema}
Let $R, Q$ be two crisp relations on $X$.

(i) $Q$ is a compatible extension of $R$ iff $R \subseteq Q$ and $Q(y, x) \leq R(x, y) \rightarrow R(y, x)$ for all $x, y \in X$;

(ii) $R$ is consistent iff $T(R)(y, x) \leq R(x, y) \rightarrow R(y, x)$ for all $x, y \in X$.
\end{lema}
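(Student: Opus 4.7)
The key ingredient is Lemma 4.1, which identifies $\neg P_R(x,y)$ with the Boolean implication $R(x,y) \rightarrow R(y,x)$. My strategy is to rewrite the inclusion $P_R \subseteq P_Q$ pointwise in $L_2$, use the hypothesis $R \subseteq Q$ to simplify, and then recognise the resulting condition as the residuum inequality displayed in the lemma.

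For part (i), I fix $x, y \in X$ and analyse $P_R(x,y) \leq P_Q(x,y)$ in the two-element Boolean algebra. The inequality is automatic unless $P_R(x,y) = 1$, so the only content is the implication: if $R(x,y) = 1$ and $R(y,x) = 0$, then both $Q(x,y) = 1$ and $Q(y,x) = 0$. Under the hypothesis $R \subseteq Q$ the first conclusion is free, leaving only the requirement that $R(x,y) = 1$ and $R(y,x) = 0$ force $Q(y,x) = 0$. The contrapositive of this reads $Q(y,x) \leq \neg R(x,y) \vee R(y,x)$, which by Lemma 4.1 is exactly $Q(y,x) \leq R(x,y) \rightarrow R(y,x)$. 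Reversing the case analysis gives the converse direction.

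For part (ii), I apply (i) with $Q := T(R)$. Since $R \subseteq T(R)$ always holds, the $R \subseteq Q$ hypothesis is automatic, and consistency, defined as the property that $T(R)$ is a compatible extension of $R$, collapses to the single inequality $T(R)(y,x) \leq R(x,y) \rightarrow R(y,x)$.

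The only delicate point I would watch for is the index swap: the asymmetric part $P_R$ is evaluated at $(x,y)$, while the stated bound involves $Q(y,x)$. Keeping this reversal straight is what determines the correct form of the residuum on the right-hand side; once that bookkeeping is in place the argument is a short Boolean computation and no machinery beyond Lemma 4.1 is needed.
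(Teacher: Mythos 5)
Your proposal is correct and follows essentially the same route as the paper: under the hypothesis $R \subseteq Q$, the inclusion $P_R \subseteq P_Q$ reduces to the single condition $Q(y,x) \leq \neg P_R(x,y)$, which Lemma 4.1 identifies with $Q(y,x) \leq R(x,y) \rightarrow R(y,x)$, and (ii) is the specialization $Q := T(R)$. The only cosmetic difference is that you carry out the reduction by a truth-value case analysis in $L_2$, whereas the paper uses the Boolean contraposition $a \leq \neg b \Leftrightarrow b \leq \neg a$ as a chain of equivalences; the content is identical.
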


\begin{proof}
Assume $R \subseteq Q$. By Lemma 4.1 we have for all $x, y \in X$:

$P_R(x, y) \leq P_Q(x, y)  \Leftrightarrow P_R(x, y) \leq Q(x, y) \wedge \neg Q(y, x) \Leftrightarrow$

$P_R(x, y) \leq \neg Q(y, x) \Leftrightarrow Q(y, x) \leq \neg P_R(x, y)=R(x, y) \rightarrow R(y, x)$

From these equivalences (i) follows immediately and (ii) is a particular case of (i).
\end{proof}

If $\ast$ is a left--continuous t--norm snd $R$ is a fuzzy relation on $X$, then the {\emph{asymmetric part}} of $R$ is a fuzzy relation $P_R$ on $X$
defined by $P_R(x, y)=R(x, y) \ast \neg R(y, x)$ for any $x, y \in X$.

Let $\wedge$ be the G\"{o}del t-norm. In this case the asymmetric part $P_R : X^2 \rightarrow [0, 1]$ has the form $P_R(x, y)=R(x, y) \wedge \neg R(y, x)$ for all $x, y \in X$. Following \cite{georgescu2}, a fuzzy relation $Q$ on $X$ is a {\emph{compatible extension}} of $R$ if $R \subseteq Q$ and $P_R \subseteq P_Q$; $R$ is {\emph{consistent}} if $T(R)$ is a compatible extension of $R$.

\begin{lema}
Assume that $\ast$ is the G\"{o}del t-norm.
Let $R, Q$ be two fuzzy relations on $X$.

(i) $Q$ is a compatible extension of $R$ if $R \subseteq Q$ and $Q(y, x) \leq \neg P_R(x, y)$ for all $x, y \in X$;

(ii) $R$ is consistent iff $T(R)(y, x) \leq \neg P_R(x, y)$ for all $x, y \in X$.
\end{lema}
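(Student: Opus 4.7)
The plan is to mirror the proof of Lemma 4.2 almost verbatim, exploiting the fact that in the Gödel case the t--norm is $\wedge$, so the characterization of the negation from Lemma 3.3(1), namely $a \leq \neg b \Leftrightarrow a\ast b = 0$, becomes the symmetric condition $a \leq \neg b \Leftrightarrow a\wedge b = 0$. This symmetry is exactly what replaces the Boolean identity $\neg(a\wedge \neg b)=a\to b$ used in Lemma 4.2.

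For (i), I would assume $R\subseteq Q$ and unfold $P_R(x,y)=R(x,y)\wedge\neg R(y,x)$ and $P_Q(x,y)=Q(x,y)\wedge\neg Q(y,x)$. Since $R\subseteq Q$ gives $P_R(x,y)\leq R(x,y)\leq Q(x,y)$, the inclusion $P_R\subseteq P_Q$ reduces to the single inequality $P_R(x,y)\leq \neg Q(y,x)$ for all $x,y$. Then I would apply Lemma 3.3(1) to rewrite this as $P_R(x,y)\wedge Q(y,x)=0$, and apply Lemma 3.3(1) again in the other direction (using the commutativity of $\wedge$) to obtain the desired $Q(y,x)\leq \neg P_R(x,y)$. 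Running the chain of equivalences backward establishes the converse.

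Part (ii) follows by specializing (i) to $Q=T(R)$, after remarking that $R\subseteq T(R)$ holds by definition of the $\ast$--transitive closure, so the side hypothesis $R\subseteq Q$ is automatic and the only remaining condition is $T(R)(y,x)\leq \neg P_R(x,y)$.

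There is no substantial obstacle: the only subtlety is checking that the step $P_R(x,y)\leq \neg Q(y,x)\;\Leftrightarrow\; Q(y,x)\leq \neg P_R(x,y)$ is legitimate, which holds precisely because in the Gödel t--norm $\ast$ is commutative and coincides with $\wedge$, so Lemma 3.3(1) can be read in either argument. This is exactly the ingredient that replaces the Boolean symmetry used in Lemma 4.2.
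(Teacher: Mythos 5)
Your proof is correct and follows essentially the same route as the paper: the same chain of equivalences $P_R\subseteq P_Q \Leftrightarrow P_R(x,y)\leq\neg Q(y,x) \Leftrightarrow P_R(x,y)\wedge Q(y,x)=0 \Leftrightarrow Q(y,x)\leq\neg P_R(x,y)$, using $R\subseteq Q$ to discard the $Q(x,y)$ conjunct and Lemma 3.3(1) in both directions, with (ii) as the special case $Q=T(R)$. (A minor remark: the symmetry step relies only on the commutativity of the t--norm, which holds for every t--norm, not specifically on $\ast$ being G\"{o}del.)
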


\begin{proof}
(i) If $R \subseteq Q$ the following assertions are equivalent:

(1) $P_R \subseteq P_Q$;

(2) $P_R(x, y) \leq Q(x, y) \wedge \neg Q(y, x)$ for all $x, y \in X$;

(3) $P_R(x, y) \leq \neg Q(y, x)$ for all $x, y \in X$;

(4) $P_R(x, y) \wedge Q(y, x)=0$ for all $x, y \in X$;

(5) $Q(y, x) \leq \neg P_R(x, y)$ for all $x, y \in X$.

According to these equivalences (i) follows easily; (ii) is a particular case of (i).
\end{proof}

In the definitions of the notions of compatible extension and consistent consistent fuzzy relation the negation appears. According to Remark 3.4, in the case of the G\"{o}del t-norm
these notions do not describe adequately situations of fuzzy uncertainty. It is necessary to find some notion to elude negation, using possibly only the notion of implication. By Lemma
4.1, for the crisp relations, the asymmetric part is expressed with respect to the Boolean implication, thus the definitions existing in the crisp case satisfy this  desideratum.

\begin{remarca}
According to Lemma 4.3 (ii), the notion of consistency of a fuzzy relation from \cite{georgescu2} coincides with the notion of consistency defined in \cite{chaudhari}.
\end{remarca}

Since for the G\"{o}del t-norm Lemma 4.1 is not valid, we cannot express this definition of the compatible extension and the consistency only
in terms of the residuum.

Following the suggestion given by Lemma 4.2 we shall propose a different alternative definition, in which the residuum appears.

\begin{definitie}
Let $\ast$ be a left--continuous t--norm and the fuzzy relations $R, Q$ on $X$. $Q$ is called a $\ast$-compatible extension of $R$ if $R \subseteq Q$ and $Q(y, x) \leq R(x, y) \rightarrow R(y, x)$ for any $x, y \in X$. Relation $R$ is $\ast$--consistent if its $\ast$--transitive closure $T(R)$ of $R$ is a $\ast$--compatible extension of $R$, i.e. $T(R)(y, x) \leq R(x, y) \rightarrow R(y, x)$ for all $x, y \in X$.
\end{definitie}

\begin{remarca}
We assume that $\ast$ is the G\"{o}del t-norm $\wedge$. Let $Q$ be a $\wedge$--compatible extension of $R$. For any $x, y \in X$ we have $Q(y, x) \wedge P_R(x, y)=Q(y, x) \wedge R(x, y) \wedge \neg Q(y, x)=0$ therefore $Q(y, x) \leq \neg P_R(x, y)$. According to Lemma 4.3, $Q$ is a compatible extension of $R$.
\end{remarca}

\begin{exemplu}
Assume that $\ast$ is the G\"{o}del t-norm $\wedge$, $X=\{x, y\}$ and $R, Q$ the fuzzy relations on $X$ defined by

$R(x, x)=R(y, y)=1$; $R(x, y)=\frac{1}{2}$; $R(y, x)=\frac{1}{3}$;

$Q(x, x)=Q(y, y)=1$; $Q(x, y)=Q(y, x)=\frac{2}{3}$.

It is easy to see that $Q$ is a compatible extension of $R$ but

$R(x, y) \rightarrow R(y, x)=\frac{1}{2} \rightarrow \frac{1}{3}=\frac{1}{3}\leq \frac{2}{3}=Q(y, x)$.

Then $Q$ is not a $\wedge$--compatible extension of $R$.

In conclusion the $\wedge$--compatibility implies the compatibility but the converse is not true.
\end{exemplu}

\begin{remarca}
Assume that $\ast$ is the Lukasiewicz t--norm $\ast_L$. Then, for any $x, y \in X$, we have the equivalences:

$Q(x, y) \leq R(x, y) \rightarrow R(y, x)$ iff $Q(y, x) \leq 1- R(x, y) + R(y, x)$

thus $Q$ is a $\ast_L$--compatible extension of $R$ iff $R(x, y) \leq Q(x, y)$

and $R(x, y) +R(y, x) \leq 1-Q(y, x)$ for any $x, y \in X$.
\end{remarca}

\begin{propozitie}
If $\ast$ is a left--continuous t--norm and $R$ a fuzzy relation on $X$, then the following assertions are equivalent:

(i)$R$ is $\ast$--consistent;

(ii) For any integer $n \geq 1$ and for any $x, y, t_1, \ldots, t_n \in X$ we have

$R(y, t_1) \ast R(t_1, t_2) \ast \ldots \ast R(t_n, x)\leq R(x, y) \rightarrow R(y, x)$.

\end{propozitie}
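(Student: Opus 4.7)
The proof plan is straightforward: unfold the definition of $\ast$-consistency using the explicit formula for $T(R)$ given in Lemma 3.6, and use the fact that a supremum of elements lies below a fixed bound iff each element does.

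More precisely, by Definition 4.4, $R$ is $\ast$-consistent iff $T(R)(y,x) \leq R(x,y) \rightarrow R(y,x)$ for all $x,y \in X$. By Lemma 3.6 we have
\[
T(R)(y,x) = R(y,x) \vee \bigvee_{n=1}^{\infty} \bigvee_{t_1,\ldots,t_n \in X} R(y,t_1) \ast R(t_1,t_2) \ast \cdots \ast R(t_n,x).
\]
The inequality $T(R)(y,x) \leq R(x,y) \rightarrow R(y,x)$ is thus equivalent to the simultaneous validity of
\[
R(y,x) \leq R(x,y) \rightarrow R(y,x)
\]
and
\[
R(y,t_1) \ast R(t_1,t_2) \ast \cdots \ast R(t_n,x) \leq R(x,y) \rightarrow R(y,x)
\]
for all $n \geq 1$ and $t_1,\ldots,t_n \in X$ (this decomposition of the supremum bound does not even need Lemma 3.2(2); a supremum is below a given value iff every member of the family is).

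For (i)$\Rightarrow$(ii), I would simply note that each product $R(y,t_1) \ast \cdots \ast R(t_n,x)$ is one of the terms in the supremum defining $T(R)(y,x)$, hence is bounded by $T(R)(y,x)$, which by $\ast$-consistency is bounded by $R(x,y) \rightarrow R(y,x)$. For (ii)$\Rightarrow$(i), I would combine the hypothesis with the universally valid inequality $R(y,x) \leq R(x,y) \rightarrow R(y,x)$, which follows from Lemma 3.1(4), and take the supremum over all chains $t_1,\ldots,t_n$ and all $n$ to conclude $T(R)(y,x) \leq R(x,y) \rightarrow R(y,x)$.

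There is essentially no obstacle: the statement is a direct reformulation obtained by substituting the explicit description of $T(R)$ into the defining inequality of $\ast$-consistency. The only subtlety worth flagging is that the $n=0$ term $R(y,x)$ of the transitive closure formula is absorbed automatically by Lemma 3.1(4), so condition (ii) only needs to be postulated for $n \geq 1$.
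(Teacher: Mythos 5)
Your proposal is correct and follows essentially the same route as the paper: unfold $T(R)(y,x)$ via Lemma 3.6, split the supremum bound into the $R(y,x)$ term (handled by Lemma 3.1(4)) and the iterated products, and observe that the latter condition is exactly (ii). No gaps.
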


\begin{proof}
In accordance with Lemma 3.6, $R$ is $\ast$--consistent iff for any $x, y \in X$ we have

(a) $R(y, x) \vee \displaystyle \bigvee_{n=1}^\infty \bigvee_{t_1, \ldots, t_n \in X} (R(y, t_1) \ast \ldots \ast R(t_n, x)) \leq R(x, y) \rightarrow R(y, x)$.

By the definition of the supremum, (a) is equivalent with the conjunction of the following two conditions:

(b) $R(y, x) \leq R(x, y) \rightarrow R(y, x)$;

(c) For any $n \geq 1$, $\displaystyle \bigvee_{t_1, \ldots, t_n \in X} (R(y, t_1) \ast \ldots \ast R(t_n, x)) \leq R(x, y) \rightarrow R(y, x)$.

The inequality (b) is always true (by Lemma 3.1, (4)) and (ii), (c) are equivalent assertions.
\end{proof}

\begin{remarca}
By Lemma 3.1 (1) and Proposition 4.9, $R$ is $\ast$--consistent iff for any integer $n \geq 1$ and $t_1, \ldots, t_n \in X$, $R(y, t_1) \ast \ldots \ast
R(t_n, x) \ast R(x, y) \leq R(y, x)$.
\end{remarca}

\begin{lema}
Assume $\ast$ is the G\"{o}del t-norm $\wedge$. If $Q$ is a $\wedge$--compatible extension of $R$ and $S$ is a $\wedge$--compatible extension of $Q$ then $S$ is a $\wedge$--compatible extension of $R$.
\end{lema}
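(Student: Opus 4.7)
The plan is to verify both clauses of Definition 4.6 for $S$ relative to $R$. The inclusion $R \subseteq S$ is immediate from $R \subseteq Q$ and $Q \subseteq S$, so the real content is the residuum inequality $S(y,x) \leq R(x,y) \rightarrow R(y,x)$ for all $x,y \in X$.

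The key observation is that, for the Gödel t-norm, Lemma 3.1(1) reduces each occurrence of the residuum condition to a meet inequality: $a \leq b \rightarrow c$ iff $a \wedge b \leq c$. Applying this to the two hypotheses converts them into
\[
Q(y,x) \wedge R(x,y) \leq R(y,x), \qquad S(y,x) \wedge Q(x,y) \leq Q(y,x),
\]
and the goal becomes $S(y,x) \wedge R(x,y) \leq R(y,x)$.

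From here the proof is a short chain of inequalities. Using $R(x,y) \leq Q(x,y)$ (from $R \subseteq Q$), I get
\[
S(y,x) \wedge R(x,y) \leq S(y,x) \wedge Q(x,y) \leq Q(y,x).
\]
Meeting both sides with $R(x,y)$ and using the second hypothesis then yields
\[
S(y,x) \wedge R(x,y) \leq Q(y,x) \wedge R(x,y) \leq R(y,x),
\]
and one final application of Lemma 3.1(1) recasts this as $S(y,x) \leq R(x,y) \rightarrow R(y,x)$, completing the proof.

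There is no real obstacle here; the only subtlety is the crucial use of $\ast = \wedge$, which makes the residuation equivalence in Lemma 3.1(1) collapse neatly onto the meet operation and lets us chain the two separate compatibility conditions. For a general left--continuous t--norm the same argument would require bounding $S(y,x) \ast R(x,y)$ by $S(y,x) \ast Q(x,y)$, which is still valid, but the subsequent step of meeting with $R(x,y)$ to extract $R(y,x)$ relies on the idempotence $a \ast a = a$ peculiar to the Gödel case. This is consistent with the fact that the lemma is stated only for $\wedge$.
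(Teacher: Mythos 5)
Your proof is correct and follows essentially the same route as the paper's: unfold both compatibility hypotheses via Lemma 3.1(1), chain $S(y,x)\wedge R(x,y)\leq S(y,x)\wedge Q(x,y)\leq Q(y,x)$, and then meet with $R(x,y)$ once more (using idempotence of $\wedge$) to land in $R(y,x)$. Your closing observation about why idempotence blocks the argument for a general left--continuous t--norm is accurate and explains why the paper must impose condition (ii) in Theorem 4.18.
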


\begin{proof}
For any $x, y \in X$ we have $R(x, y) \leq S(x, y)$ and $S(y, x) \wedge R(x, y) \leq S(y, x) \wedge Q(x, y) \leq Q(y, x)$ hence $S(y, x) \wedge R(x, y) \leq R(y, x)$.
\end{proof}

A family $\mathcal{R}$ of fuzzy relations on $X$ is said to be {\emph{closed upward}} if $\bigcup \{R|R \in \mathcal{C} \} \in \mathcal{R}$  for any chain $\mathcal{C} \subseteq \mathcal{R}$ .

\begin{lema}
The class of $\ast$--consistent fuzzy relations on $X$ is closed upward.
\end{lema}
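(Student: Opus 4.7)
The plan is to take a chain $\mathcal{C} = \{R_i\}_{i \in I} \subseteq \mathcal{R}$ of $\ast$--consistent fuzzy relations, set $R = \bigcup \mathcal{C}$ (so $R(x,y) = \bigvee_{i \in I} R_i(x,y)$), and verify the consistency condition for $R$ through the convenient reformulation given by Remark 4.10: for all $n \geq 1$ and $x, y, t_1, \ldots, t_n \in X$,
\[
R(y, t_1) \ast R(t_1, t_2) \ast \cdots \ast R(t_n, x) \ast R(x, y) \leq R(y, x).
\]

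The first step is to expand the left-hand side using the distributivity of $\ast$ over arbitrary suprema (Lemma 3.2(3), applied iteratively to the $n+2$ factors). This rewrites the left-hand side as the supremum, over all $(n+2)$-tuples of indices $(i_0, i_1, \ldots, i_{n+1}) \in I^{n+2}$, of
\[
R_{i_0}(y, t_1) \ast R_{i_1}(t_1, t_2) \ast \cdots \ast R_{i_n}(t_n, x) \ast R_{i_{n+1}}(x, y).
\]
The second step uses the chain hypothesis: any finite subfamily of a chain has a maximum element, so for each tuple $(i_0, \ldots, i_{n+1})$ there exists $i^\ast \in I$ with $R_{i_k} \subseteq R_{i^\ast}$ for $k=0, \ldots, n+1$. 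Since $\ast$ is monotone in each argument (immediate from Lemma 3.1(3) and the order characterization in Lemma 3.1(1)), the above product is bounded above by $R_{i^\ast}(y, t_1) \ast \cdots \ast R_{i^\ast}(t_n, x) \ast R_{i^\ast}(x, y)$.

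The third step invokes the $\ast$--consistency of the single relation $R_{i^\ast}$: by Remark 4.10 this product is $\leq R_{i^\ast}(y, x) \leq R(y, x)$. Taking the supremum over all index tuples on the left yields the desired inequality for $R$, and a further appeal to Remark 4.10 concludes that $R$ is $\ast$--consistent.

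The only delicate point is the second step, where one must package distributivity correctly and exploit the chain condition finitely many times; the rest is bookkeeping. No obstacle is expected, because Proposition 4.9 and Remark 4.10 reduce $\ast$--consistency to a purely ``conjunctive'' inequality in each fixed finite cycle, and such inequalities pass through suprema along a chain by exactly the argument above.
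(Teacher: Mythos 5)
Your proof is correct, and it is a cleaner packaging of the same underlying idea as the paper's. Both arguments expand the union as a pointwise supremum, distribute $\ast$ over the suprema (valid by left-continuity, Lemma 3.2), use the chain condition to dominate each finite index tuple by a single $R_{i^\ast}$, and then invoke the $\ast$--consistency of $R_{i^\ast}$. The difference is in the decomposition: the paper distributes only over the $n+1$ factors of the path $R(y,t_1)\ast\cdots\ast R(t_n,x)$ and then handles the remaining factor $R(x,y)=\bigvee_k R_k(x,y)$ separately, which forces it through the residuum form of Proposition 4.9 and an extra appeal to the law $(\bigvee_k a_k)\rightarrow a=\bigwedge_k(a_k\rightarrow a)$ (Lemma 3.2(2)), with a second round of ``find a common upper bound in the chain.'' You instead fold $R(x,y)$ into the initial distribution over $(n+2)$-tuples and work entirely with the multiplicative form of Remark 4.10, so a single application of the chain property suffices and Lemma 3.2(2) is never needed. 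This buys a shorter and less error-prone argument (the paper's version in fact suffers from index slips in exactly the step you avoid); the only things you must be careful about --- that Remark 4.10 is an equivalence, that a finite subset of a chain has a greatest element, and that $\ast$ is monotone --- are all addressed in your write-up.
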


\begin{proof}
Let $\{R_i\}_{i \in I}$ be a chain of $\ast$--consistent fuzzy relations on $X$ and $R=\displaystyle \bigcup_{i \in I} R_i$. We shall prove that $R$ is $\ast$--consistent.

Let $x, y \in X$. By Proposition 4.9 we must prove that for any integer $n \geq 1$ and $t_1, \ldots, t_n \in X$ the following inequality holds:

(a) $R(y, t_1) \ast \ldots \ast R(t_n, x) \leq R(x, y) \rightarrow R(y, x)$.

By the definition of $R$ and Lemma 3.2 (3):

(b) $R(y, t_1) \ast \ldots R(t_n, x)=\displaystyle [\bigvee_{i_1 \in I} R_{i_1}(y, t_1)] \ast \ldots \ast [\bigvee_{i_n \in I} R_{i_n}(t_n, x)]$

$=\displaystyle \bigvee_{i_1, \ldots, i_n \in I} R_{i_1}(y, t_1) \ast \ldots \ast R_{i_n}(t_n, x)$.

Let $i_1, \ldots, i_n \in I$. Since $\{R_i\}_{i \in I}$ is a chain there exists $j \in \{i_1, \ldots, i_n\}$ such that $R_{i_1}, \ldots, R_{i_n} \subseteq R_j$ hence

$R_{i_1}(y, t_1) \ast \ldots \ast R_{i_n}(t_n, x) \leq R_j(y, t_1) \ast \ldots \ast R_j(t_n, x)$.

Let $k \in I$ and $u \in \{j, k\}$ such that $R_j \subseteq R_u$, $R_k \subseteq R_u$. Thus

$R_j(y, t_1) \ast \ldots \ast R_j(t_n, x) \ast R_k(x, y) \leq R_n(y, t_1) \ast \ldots \ast R_n(t_n, x) \ast R_n(x, y) \leq$

$\leq T(R_n)(y, x) \ast R_n(x, y) \leq R_n(y, x) \leq R(y, x)$.

because $R$ is $\ast$--consistent. It follows that

$R_j(y, t_1) \ast \ldots R_j(t_n, x) \ast \leq R_k(x, y) \rightarrow R(y, x)$

therefore

$R_{i_1}(y, t_1) \ast \ldots \ast R_{i_n}(t_n, x) \leq R_k(x, y) \rightarrow R(y, x)$.

The last inequality holds for any $k \in I$, hence by Lemma 3.2 (2)

$R_{i_1}(y, t_1) \ast \ldots \ast R_{i_n}(t_n, x) \leq \displaystyle \bigwedge_{k \in I} (R_k(x, y) \rightarrow R(y, x))$

$\leq \displaystyle (\bigvee_{k \in I} R_k(x, y)) \rightarrow R(y, x)=R(x, y) \rightarrow R(y, x)$.

This inequality holds for all $i_1, \ldots, i_n \in I$ therefore

$\displaystyle \bigvee_{i_1, \ldots, i_n \in I} R_{i_1}(y, t_1) \ast \ldots \ast R_{i_n}(t_n, x) \leq R(x, y) \rightarrow R(y, x)$.

Thus, using (b), we obtain (a).
\end{proof}

\begin{lema}
\cite{georgescu2}
The class of $\ast$--transitive fuzzy relations on $X$ is closed upward.
\end{lema}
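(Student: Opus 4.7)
The plan is to mimic the structure of the proof of Lemma 4.12 but in a much simpler setting, since $\ast$--transitivity is a condition involving only three points and two factors (instead of a chain of arbitrary length together with a comparison with the $\ast$--transitive closure).

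First I would take a chain $\{R_i\}_{i \in I}$ of $\ast$--transitive fuzzy relations on $X$ and define $R = \bigcup_{i \in I} R_i$ pointwise, i.e.\ $R(x,y) = \bigvee_{i \in I} R_i(x,y)$ for all $x, y \in X$. The goal is to verify $R(x,y) \ast R(y,z) \leq R(x,z)$ for all $x, y, z \in X$.

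Next I would expand the left side using the distributivity of $\ast$ over arbitrary suprema (Lemma 3.2 (3)):
\[
R(x,y) \ast R(y,z) = \Bigl(\bigvee_{i \in I} R_i(x,y)\Bigr) \ast \Bigl(\bigvee_{j \in I} R_j(y,z)\Bigr) = \bigvee_{(i,j) \in I \times I} R_i(x,y) \ast R_j(y,z).
\]
Then, for each pair $(i,j) \in I \times I$, I would exploit the chain property: there exists $k \in \{i, j\}$ with $R_i \subseteq R_k$ and $R_j \subseteq R_k$. Combined with the monotonicity of $\ast$ and the $\ast$--transitivity of $R_k$, this gives
\[
R_i(x,y) \ast R_j(y,z) \leq R_k(x,y) \ast R_k(y,z) \leq R_k(x,z) \leq R(x,z).
\]
Taking the supremum over all $(i,j) \in I \times I$ then yields $R(x,y) \ast R(y,z) \leq R(x,z)$, as required.

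There is essentially no obstacle here: the only ingredients are the distributivity law (Lemma 3.2 (3)), the defining property of a chain, and the $\ast$--transitivity of each $R_k$. Compared with Lemma 4.12, we do not need to iterate the chain property over arbitrarily many indices $i_1, \dots, i_n$, so no induction or auxiliary manipulation of the residuum is necessary.
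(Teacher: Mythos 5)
Your proof is correct: the paper itself gives no argument for this lemma (it is quoted from \cite{georgescu2}), and your combination of Lemma 3.2 (3), the chain property, and the $\ast$--transitivity of the dominating member is exactly the standard argument, mirroring the technique the paper uses in the proofs of Lemmas 4.12 and 4.14. Nothing is missing.
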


\begin{lema}
Let $R$ be a fuzzy relation on $X$. The class \hspace{0.2cm} $\mathcal{E}(R)$ of all $\ast$--compatible extensions of $R$ is closed upward.
\end{lema}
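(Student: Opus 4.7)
The plan is to unpack the definition directly; unlike the $\ast$--consistent case (Lemma 4.12), where the transitive closure interacts nontrivially with the union, the $\ast$--compatibility condition is just a pointwise upper bound on $Q(y,x)$ by the fixed quantity $R(x,y)\rightarrow R(y,x)$, so the verification is almost immediate.

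Concretely, I would take a chain $\mathcal{C}=\{Q_i\}_{i\in I}\subseteq\mathcal{E}(R)$ and set $Q=\bigcup_{i\in I}Q_i$, meaning $Q(x,y)=\bigvee_{i\in I}Q_i(x,y)$ for all $x,y\in X$. I need to verify the two conditions from Definition 4.6. For the first, $R\subseteq Q_i$ for every $i$ yields $R(x,y)\leq Q_i(x,y)\leq Q(x,y)$ for every $x,y\in X$, so $R\subseteq Q$.

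For the second, fix $x,y\in X$. Each $Q_i\in\mathcal{E}(R)$ gives
\[
Q_i(y,x)\leq R(x,y)\rightarrow R(y,x).
\]
Since the right-hand side does not depend on $i$, it is an upper bound for the family $\{Q_i(y,x)\}_{i\in I}$, hence also for its supremum, i.e.
\[
Q(y,x)=\bigvee_{i\in I}Q_i(y,x)\leq R(x,y)\rightarrow R(y,x).
\]
This gives $Q\in\mathcal{E}(R)$, proving the claim.

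There is essentially no obstacle here: I do not need to use the chain hypothesis on $\mathcal{C}$, nor Lemma 3.2, because the bounding quantity $R(x,y)\rightarrow R(y,x)$ is independent of the index. In fact the argument shows the stronger statement that $\mathcal{E}(R)$ is closed under arbitrary (not just chain-indexed) unions of its members. The only point worth flagging is the convention that union of fuzzy relations means pointwise supremum, which is the convention the paper uses (cf.\ the computation in Lemma 4.12).
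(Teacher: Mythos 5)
Your proof is correct, and it is in fact slightly more direct than the one in the paper. The paper verifies the bound $Q(y,x)\leq R(x,y)\rightarrow R(y,x)$ by first passing through the adjunction: it multiplies by $R(x,y)$, writes $Q(y,x)\ast R(x,y)=\bigl[\bigvee_{i}Q_i(y,x)\bigr]\ast R(x,y)=\bigvee_{i}Q_i(y,x)\ast R(x,y)\leq R(y,x)$ using the sup-distributivity of $\ast$ (Lemma 3.2(1)), and then applies Lemma 3.1(1) to recover the residuum inequality. You instead observe that each $Q_i(y,x)$ is bounded by the fixed quantity $R(x,y)\rightarrow R(y,x)$, so the supremum is too — this needs nothing beyond the definition of supremum, and in particular does not use left-continuity of the t-norm or the residuation adjunction at all. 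Your further remarks are also accurate: the chain hypothesis plays no role, so $\mathcal{E}(R)$ is closed under arbitrary unions (in contrast with Lemma 4.12, where the chain condition is genuinely used to collapse the products $R_{i_1}(y,t_1)\ast\cdots\ast R_{i_n}(t_n,x)$ onto a single index). Both arguments are short and valid; yours isolates more cleanly why this lemma is trivial while Lemma 4.12 is not.
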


\begin{proof}
Let $(Q_i)_{i \in I}$ be a chain of $\ast$--compatible extensions of $R$. Let us denote $Q=\displaystyle \bigcup_{i \in I} Q_i$, hence $R \subseteq Q$. For any $x, y \in X$ we have

$Q(y, x) \ast R(x, y)=[\displaystyle \bigvee_{i \in I} Q_i(y, x)] \ast R(x, y)=\displaystyle \bigvee_{i \in I} Q_i(y, x) \ast R(x, y) \leq R(y, x)$

because any $Q_i$ is a $\ast$--compatible extension of $R$. By Lemma 3.1 (1) we get $Q(y, x) \leq R(x, y) \rightarrow R(y, x)$. Then
$Q$ is a $\ast$--compatible extension of $R$.
\end{proof}

\begin{lema}
The intersection of two closed upward classes of fuzzy relations is closed upward.
\end{lema}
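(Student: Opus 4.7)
The plan is to unfold the definition of ``closed upward'' directly, since no fuzzy-specific machinery is actually required: the property is purely about the order structure on families of relations under inclusion. Concretely, I would let $\mathcal{R}_1$ and $\mathcal{R}_2$ be two closed upward classes of fuzzy relations on $X$, and verify the defining property for their intersection $\mathcal{R}_1 \cap \mathcal{R}_2$.

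The key step is as follows. Take an arbitrary chain $\mathcal{C} \subseteq \mathcal{R}_1 \cap \mathcal{R}_2$ (where ``chain'' refers to the inclusion order on fuzzy relations). Then trivially $\mathcal{C} \subseteq \mathcal{R}_1$ and $\mathcal{C}$ is still a chain when viewed as a subfamily of $\mathcal{R}_1$; since $\mathcal{R}_1$ is closed upward, $\bigcup\{R \mid R \in \mathcal{C}\} \in \mathcal{R}_1$. The identical argument with $\mathcal{R}_2$ in place of $\mathcal{R}_1$ gives $\bigcup\{R \mid R \in \mathcal{C}\} \in \mathcal{R}_2$. Combining these two memberships yields $\bigcup\{R \mid R \in \mathcal{C}\} \in \mathcal{R}_1 \cap \mathcal{R}_2$, which is exactly what needs to be shown.

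There is no real obstacle here: the lemma is a formal set-theoretic observation that holds for any notion of ``closed upward'' defined via closure of chains under unions, and it does not interact with the t--norm $\ast$, the residuum, or any specific property of the fuzzy relations themselves. The only small point worth flagging in the write-up is that a chain in the intersection is automatically a chain in each factor, which is why closure in $\mathcal{R}_1$ and $\mathcal{R}_2$ separately can be invoked. The result will be used later (for instance by intersecting the class of $\ast$--compatible extensions $\mathcal{E}(R)$ from Lemma 4.15 with the class of $\ast$--transitive relations from Lemma 4.14, or with the class of $\ast$--consistent relations from Lemma 4.12) to ensure that natural subclasses remain closed upward, a prerequisite for Zorn-type maximality arguments in the upcoming extension theorems.
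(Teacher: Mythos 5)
Your proof is correct and is exactly the routine argument intended here: a chain in the intersection is a chain in each class, so its union lies in both, hence in the intersection. The paper states this lemma without proof, treating it as immediate, and your write-up supplies precisely that immediate argument.
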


If $R$ be a fuzzy relation on $X$ and $x, y \in X$ then we define the extension $R[x, y]$ of $R$ by

$R[x, y](a, b)=\left\{\begin{array}{rcl}
R(a, b) & \mbox{if}& (a, b) \not=(x, y) \\
1 & \mbox{if}& (a, b)=(x, y)
\end{array}\right.$

A class $\mathcal{R}$ of fuzzy relations on $X$ is {\emph{arc--receptive}} if $T(R[x, y])\in \mathcal{R}$ for any $\ast$--transitive relation $R \in \mathcal{R}$ and for all $x \not= y$ such that $R(x, y)=0$.

\begin{lema}
The intersection of two arc--receptive classes is arc--receptive.
\end{lema}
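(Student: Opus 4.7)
The plan is to simply unwind the definition of arc-receptivity and check that it is preserved by pairwise intersection; no residuum or t-norm calculations are needed. Let $\mathcal{R}_1$ and $\mathcal{R}_2$ be two arc-receptive classes of fuzzy relations on $X$, and set $\mathcal{R}=\mathcal{R}_1\cap\mathcal{R}_2$. To verify that $\mathcal{R}$ is arc-receptive, I will fix an arbitrary $\ast$-transitive $R\in\mathcal{R}$ and distinct $x\neq y$ with $R(x,y)=0$, and then show $T(R[x,y])\in\mathcal{R}$.

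The key observation is that membership in $\mathcal{R}_1\cap\mathcal{R}_2$ gives $R\in\mathcal{R}_1$ and $R\in\mathcal{R}_2$, while the hypotheses "$R$ is $\ast$-transitive" and "$R(x,y)=0$" are intrinsic properties of $R$ and do not depend on which class we regard $R$ as sitting in. Therefore the arc-receptivity hypothesis on $\mathcal{R}_1$ applies to $R$ with the same pair $x,y$ and yields $T(R[x,y])\in\mathcal{R}_1$; symmetrically, arc-receptivity of $\mathcal{R}_2$ yields $T(R[x,y])\in\mathcal{R}_2$. Intersecting the two memberships gives $T(R[x,y])\in\mathcal{R}$, which is exactly the condition required.

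There is essentially no obstacle here: unlike Lemma 4.15, whose closed-upward condition requires genuine manipulation of suprema through $\ast$ and $\rightarrow$, the arc-receptive property is a \emph{pointwise} closure condition (one operation $R\mapsto T(R[x,y])$ applied to a single relation $R$), so it automatically propagates to intersections. The only mild care needed in writing it up is to state clearly that the same $\ast$-transitive witness $R$ and the same pair $(x,y)$ are being fed to both arc-receptivity hypotheses; once that is noted, the proof is a two-line verification.
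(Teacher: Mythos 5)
Your proof is correct and is exactly the (one-line) argument intended: the paper states this lemma without proof precisely because, as you observe, arc--receptivity is a pointwise closure condition on a single relation $R$ and a single pair $(x,y)$, so it passes trivially to intersections. One small aside: the contrast you draw with Lemma 4.15 is misplaced --- that lemma (intersection of closed-upward classes is closed upward) is equally trivial and is also stated without proof; the suprema manipulations you have in mind occur in Lemmas 4.12 and 4.14, which establish that specific classes are closed upward.
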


\begin{propozitie}
If $R$ is a $\ast$--transitive fuzzy relation on $X$, $x \not= y$ and $R(y, x)=0$, then $T(R[x, y])$ is a compatible extension of $R$.
\end{propozitie}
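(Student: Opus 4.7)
The plan is to verify the two defining conditions of a $\ast$-compatible extension (Definition 4.5) for $Q := T(R[x,y])$. The inclusion $R \subseteq Q$ is immediate, since $R \subseteq R[x,y] \subseteq T(R[x,y])$. So the real task is to show $T(R[x,y])(b,a) \leq R(a,b) \to R(b,a)$ for all $a,b\in X$. By Lemma 3.1(1), this is equivalent to
\[
T(R[x,y])(b,a) \ast R(a,b) \leq R(b,a).
\]
Expanding $T(R[x,y])(b,a)$ via Lemma 3.6 and using the distributivity of $\ast$ over $\bigvee$ (Lemma 3.2(1),(3)), it suffices to show, for each ``path'' $b = t_0, t_1, \ldots, t_n = a$ in $X$,
\[
R'(t_0,t_1) \ast \cdots \ast R'(t_{n-1},t_n) \ast R(a,b) \leq R(b,a),
\]
where I write $R' := R[x,y]$. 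I would also handle the ``length 0'' term $R'(b,a) \ast R(a,b)$ separately: if $(b,a)\neq(x,y)$ this is just $R(b,a)\ast R(a,b) \leq R(b,a)$, and if $(b,a)=(x,y)$ then $R(a,b)=R(y,x)=0$ kills the product.

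For the path inequality, I would perform a case analysis on the number $k$ of indices $i$ at which the augmented edge is used, i.e.\ $(t_i,t_{i+1}) = (x,y)$ (so $R'(t_i,t_{i+1})=1$ while for the remaining edges $R'(t_i,t_{i+1})=R(t_i,t_{i+1})$).

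\textbf{Case $k=0$.} The path uses only genuine $R$-values, so $\ast$-transitivity of $R$ gives $R(t_0,t_1)\ast\cdots\ast R(t_{n-1},t_n)\leq R(b,a)$, and multiplying by $R(a,b)\leq 1$ preserves this. \textbf{Case $k=1$.} Split the path at the unique occurrence of $(x,y)$ into a prefix $b\to\cdots\to x$ and a suffix $y\to\cdots\to a$. Each sub-path contains no augmented edge, so by $\ast$-transitivity the prefix product is $\leq R(b,x)$ and the suffix product is $\leq R(y,a)$. The whole path product is therefore $\leq R(b,x)\ast R(y,a)$, and multiplying by $R(a,b)$ and applying $\ast$-transitivity to the cyclic chain $y\to a\to b\to x$ gives $R(y,a)\ast R(a,b)\ast R(b,x)\leq R(y,x)=0 \leq R(b,a)$. \textbf{Case $k\geq 2$.} Between two consecutive uses of $(x,y)$ the path contains a sub-path from $y$ to $x$ that uses no augmented edge, and since $x\neq y$ this sub-path has length at least $1$, so its product of $R$-values is $\leq R(y,x)=0$; hence the entire path product vanishes.

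The main obstacle, and the place I would spend care, is the single-use case together with its degenerate sub-cases $b=x$ or $a=y$ (or both), where one of the sub-paths is empty and the corresponding ``$\ast$-transitivity bound'' $\leq R(b,x)$ or $\leq R(y,a)$ is not literally produced by Lemma 3.6. In these cases I would just compute directly: if $b=x$, the path product is bounded by $R(y,a)$ and then $R(y,a)\ast R(a,x)\leq R(y,x)=0$; if $a=y$, it is bounded by $R(b,x)$ and $R(y,b)\ast R(b,x)\leq R(y,x)=0$; and if both, the path $x\to y$ gives product $1$ times $R(y,x)=0$. So in every branch the assumption $R(y,x)=0$ together with the $\ast$-transitivity of $R$ closes the argument.
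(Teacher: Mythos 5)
Your proof is correct and follows essentially the same route as the paper: both verify $R\subseteq T(R[x,y])$ trivially, reduce the residuum condition via Lemma 3.1(1) to bounding each $R[x,y]$-path from $b$ to $a$ closed into a cycle by $R(a,b)$, and kill that cycle by extracting a genuine $R$-chain from $y$ to $x$ bounded by $R(y,x)=0$. The only difference is bookkeeping — the paper handles all occurrences of the augmented edge at once by taking the first occurrence of $x$ and the last occurrence of $y$ and majorizing the middle factors by $1$, while you split on the number of augmented edges; your explicit treatment of the degenerate sub-cases $b=x$ and $a=y$ is if anything more careful than the paper's.
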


\begin{proof}
Let $a, b \in X$. To prove that $T(R[x, y])(b, a) \leq R(a, b) \rightarrow R(b, a)$ is equivalent to verifying the following two conditions:

(a) $R[x, y](b, a) \leq R(a, b) \rightarrow R(b, a)$;

(b) For any integer $n \geq 1$ and $t_1, \ldots, t_n \in X$,

$R[x, y](b, t_1) \ast \ldots \ast R[x, y](t_n, a) \leq R(a, b) \rightarrow R(b, a)$.

If $(b, a) \not=(x, y)$ then $R[x, y](b, a)=R(b, a)\leq R(a, b) \rightarrow R(b, a)$, according to Lemma 3.1 (4). If $(b, a)=(x, y)$ then $R(a, b)=R(y, x)=0$ and $R(a, b) \rightarrow R(b, a)=0 \rightarrow R(b, a)=1$, hence (a) is also verified.

Now we shall prove (b). Let $n \geq 1$ and $t_1, \ldots, t_n \in X$. Assume $(x, y)$ is distinct from all $(b, t_1), \ldots, (t_n, a)$. Then

$R[x, y](b, t_1) \ast \ldots \ast R[x, y](t_n, a) =R(b, t_1) \ast \ldots \ast R(t_n, a) \leq$

$\leq R(b, a) \leq R(a, b) \rightarrow R(b, a)$

because $R$ is $\ast$--transitive. We consider now the case when among the pairs $(b, t_1), \ldots, (t_n, a)$ there exists one equal to $(x, y)$. Denote
$t_0=b, t_{n+1}=a$. Let $t_k$ be the first occurrence of $x$ in the sequence $t_0, \ldots, t_{n+1}$ and $t_1$ be the last occurrence of $y$ in this sequence. Then
$(t_i, t_{i+1}) \not= (x, y)$ for $i=0, 1, \ldots, k-1$ and $i=1, \ldots, n$. This yields

$R[x, y](b, t_n) \ast \ldots \ast R[x, y](t_n, a) \ast R(a, b) \leq$

$R(b, t_1) \ast \ldots \ast R(t_{k-1}, x) \ast R(y, t_{l+1}) \ast \ldots \ast R(t_n, a) \ast R(a, b)=$

$=R(y, t_{l+1}) \ast \ldots \ast R(t_n, a) \ast R(a, b) \ast R(b, t_1) \ast \ldots \ast R(t_{k-1}, x) \leq R(y, x)=0 \leq R(b, a)$.

By Lemma 3.1 (1) we obtain

$R[x, y](b, t_1) \ast \ldots \ast R[x, y](t_n, a) \leq R(a, b) \rightarrow R(b, a)$

Then (b) was proved.
\end{proof}

\begin{teorema}
Let $\mathcal{R}$ be a class of fuzzy relations on $X$ satisfying the following conditions:

(i) $\mathcal{R}$ is closed upward and arc--receptive;

(ii) If $R, Q, S \in \mathcal{R}$, $Q$ is a $\ast$--compatible extension of $R$ and $S$ is a $\ast$--compatible extension of $Q$ then
$S$ is a $\ast$--compatible extension of $R$.

Then for any $\ast$--transitive fuzzy relation $R \in \mathcal{R}$ there exists a total and $\ast$--transitive fuzzy relation $R^\ast \in \mathcal{R}$ such that
$R^\ast$ is a $\ast$--compatible extension of $R$.
\end{teorema}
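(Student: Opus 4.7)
The plan is to mimic Duggan's Zorn's lemma argument in the fuzzy setting, with hypothesis (ii) substituting for what was automatic in the crisp case. Let $\mathcal{F}$ denote the collection of fuzzy relations $Q \in \mathcal{R}$ such that $Q$ is $\ast$-transitive and a $\ast$-compatible extension of $R$, ordered by inclusion. First observe that $R \in \mathcal{F}$: by hypothesis $R$ is $\ast$-transitive and lies in $\mathcal{R}$, and Lemma 3.1(4) gives $R(y, x) \leq R(x, y) \rightarrow R(y, x)$, so $R$ is its own $\ast$-compatible extension. Hence $\mathcal{F} \neq \emptyset$.

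Next I would verify that every chain $\mathcal{C}$ in $\mathcal{F}$ has an upper bound inside $\mathcal{F}$. Setting $Q = \bigcup \mathcal{C}$, we obtain $Q \in \mathcal{R}$ from the closed-upward property of $\mathcal{R}$, $Q$ is $\ast$-transitive by Lemma 4.13, and $Q$ is a $\ast$-compatible extension of $R$ by Lemma 4.14. Thus $Q \in \mathcal{F}$, so Zorn's lemma produces a maximal element $R^\ast \in \mathcal{F}$.

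It remains to show $R^\ast$ is total. Suppose otherwise: there exist distinct $x, y \in X$ with $R^\ast(x, y) = 0$ and $R^\ast(y, x) = 0$. Since $R^\ast$ is $\ast$-transitive and lies in the arc-receptive class $\mathcal{R}$, arc-receptivity (applied using $R^\ast(x, y) = 0$) yields $T(R^\ast[x, y]) \in \mathcal{R}$. Since $R^\ast(y, x) = 0$, Proposition 4.17 gives that $T(R^\ast[x, y])$ is a $\ast$-compatible extension of $R^\ast$. Now hypothesis (ii), applied to $R, R^\ast, T(R^\ast[x, y]) \in \mathcal{R}$, yields that $T(R^\ast[x, y])$ is a $\ast$-compatible extension of $R$. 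Being a $\ast$-transitive closure, it is also $\ast$-transitive, so it belongs to $\mathcal{F}$; and it strictly contains $R^\ast$ because $T(R^\ast[x, y])(x, y) \geq R^\ast[x, y](x, y) = 1 > 0 = R^\ast(x, y)$. This contradicts the maximality of $R^\ast$, so $R^\ast$ is total as required.

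The key obstacle I expect is the appeal to (ii): unlike in the crisp case (cf.\ Lemma 4.11 for the Gödel t-norm), being a $\ast$-compatible extension is not automatically transitive under an arbitrary left-continuous t-norm, so the argument breaks precisely at the step where $T(R^\ast[x, y])$ must be shown to be a $\ast$-compatible extension of the original $R$, not merely of the intermediate $R^\ast$. This is exactly why hypothesis (ii) is imposed, and the rest of the proof is a fairly routine Zorn-plus-arc-receptivity extension argument once the three closed-upward lemmas (Lemmas 4.13, 4.14, and the hypothesis on $\mathcal{R}$) are in place.
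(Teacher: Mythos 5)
Your proof is correct and follows essentially the same route as the paper: Zorn's lemma on the class of $\ast$-transitive $\ast$-compatible extensions of $R$ in $\mathcal{R}$ (nonempty since $R$ belongs to it, closed upward by Lemmas 4.13 and 4.14 and the closed-upward hypothesis), then arc-receptivity plus Proposition 4.17 plus hypothesis (ii) to contradict maximality if $R^\ast$ were not total. Your closing remark correctly identifies the role of hypothesis (ii), which is exactly where the paper invokes it as well.
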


\begin{proof}
Let $\mathcal{C}$ be the class of $\ast$--transitive and $\ast$--compatible extensions of $R$ in $\mathcal{R}$. $\mathcal{C}$ is non--empty because $R \in \mathcal{C}$.
If $\mathcal{T}$ is the class of $\ast$--transitive fuzzy relations on $X$ then $\mathcal{C}=\mathcal{R} \cap \mathcal{E}(R) \cap \mathcal{T}$ (recall that $\mathcal{E}(R)$
is the class of all $\ast$--compatible extensions of $R$). According to Lemmas 4.13 and 4.14, $\mathcal{T}$ and $\mathcal{E}(R)$ are closed upward, hence, by Lemma 4.15,
$\mathcal{C}$ is closed upward. By Zorn's Axiom one gets a maximal member $R^\ast$ of $\mathcal{C}$, i.e. $R^\ast$ is a maximal $\ast$--transitive and $\ast$--compatible
extension of $R$ in $\mathcal{R}$.

It remains to prove that $R^\ast$ is total, i.e. for all distinct $x, y \in X$, $R^\ast(x, y)>0$ or $R^\ast(y, x)>0$. By absurdum, assume $R^\ast(x, y)=R^\ast(y, x)=0$ for some
distinct $x, y \in X$. Since $\mathcal{R}$ is arc--receptive, $x \not=y$ and $R^\ast(y, x)=0$ implies $T(R^\ast[x, y]) \in \mathcal{R}$.

$R^\ast$ fulfills the hypotheses of Proposition 4.17 ($R^\ast$ is $\ast$--transitive, $x \not=y$ and $R^\ast(y, x)=0$) therefore $T(R^\ast[x, y])$ is a $\ast$--compatible
extension of $R^\ast$. Since $R^\ast$ is a $\ast$--compatible extension of $R$ and $T(R^\ast[x, y])$ is a $\ast$--compatible extension of $R^\ast$ it follows that
$T(R^\ast[x, y])$ is a $\ast$--compatible extension of $R$ (by (ii)). Thus $T(R^\ast[x, y]) \in \mathcal{C}$. We remark that $1=R^\ast[x, y](x, y) \leq T(R^\ast[x, y])$
hence $T(R^\ast[x, y])(x, y)=1$. But $R^\ast(x, y)=0$ therefore $R^\ast$ is strictly included in  $T(R^\ast[x, y])$ contradicting the maximality of $R^\ast$. Thus $R^\ast$ is total
and the proof is finished.
\end{proof}

\begin{remarca}
The condition that $\mathcal{R}$ is closed upward ensures the existence of maximal members $R^\ast$ of $\mathcal{C}$ (by applying Zorn's Lemma), and arc-receptivity makes
$\mathcal{R}$  be closed to the operation $T(R^\ast[x, y])$, which leads to the fact that $R^\ast$ is total.
\end{remarca}

\begin{teorema}
Assume $\ast$ is the G\"{o}del t-norm $\wedge$ and the class $\mathcal{R}$ is closed upward and arc--receptive. Then any $\wedge$--transitive
fuzzy relation $R \in \mathcal{R}$ has a $\wedge$--compatible
extension $R^\ast$, total and $\wedge$--transitive.
\end{teorema}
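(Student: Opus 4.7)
The plan is to recognize Theorem 4.20 as the specialization of Theorem 4.19 to the G\"{o}del t-norm $\wedge$. Theorem 4.19 requires, besides having a $\ast$-transitive $R$ in $\mathcal{R}$, two hypotheses: (i) that $\mathcal{R}$ is closed upward and arc--receptive, and (ii) that $\ast$-compatible extension composes transitively, i.e.\ whenever $Q$ is a $\ast$-compatible extension of $R$ and $S$ is a $\ast$-compatible extension of $Q$, then $S$ is a $\ast$-compatible extension of $R$. Here $\ast$ is fixed to be $\wedge$, and condition (i) is precisely what we are given.

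The only remaining task is to verify (ii) for $\ast=\wedge$. But this is exactly the content of Lemma 4.11, which is stated without reference to any ambient class and hence applies to arbitrary fuzzy relations $R,Q,S$ on $X$; in particular it supplies (ii) when $R, Q, S$ are restricted to lie in $\mathcal{R}$. So both hypotheses of Theorem 4.19 are met.

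Applying Theorem 4.19 with $\ast=\wedge$ therefore immediately yields the desired $R^\ast\in\mathcal{R}$ that is total, $\wedge$-transitive, and a $\wedge$-compatible extension of $R$. There is no genuine obstacle: the substantive work was done upstream in Theorem 4.19 (which combines Zorn's Lemma on the class $\mathcal{C}=\mathcal{R}\cap\mathcal{E}(R)\cap\mathcal{T}$ with Proposition 4.17 to force totality) and in Lemma 4.11 (whose short argument exploits the idempotence of $\wedge$ to chain the two residual inequalities). The content of Theorem 4.20 lies in the observation that, for the G\"{o}del t-norm, the transitivity-of-compatibility hypothesis (ii) comes for free, a property that is not known to hold for a general left--continuous t--norm and which is the only obstruction to stating the result in full generality.
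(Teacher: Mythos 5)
Your proposal is exactly the paper's argument: Theorem 4.20 is obtained by specializing the general extension theorem (Theorem 4.18 in the paper's numbering) to $\ast=\wedge$, where hypothesis (i) is given and hypothesis (ii) is supplied by the composition lemma for $\wedge$--compatible extensions (Lemma 4.11). The paper's one-line proof cites ``Lemma 4.12,'' which appears to be a mis-reference for the composition lemma you correctly identify, so your version is if anything slightly more accurate.
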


\begin{proof}
By Lemma 4.12, for the case of the G\"{o}del t-norm condition (ii) holds.
\end{proof}

\begin{remarca}
Let us consider the case of the G\"{o}del t-norm $\wedge$. Since any $\wedge$--compatible extension is also compatible (Remark 5.4), from  Theorem 4.20 we get the main
result in \cite{georgescu2}(Theorem 4.7).
\end{remarca}

\begin{remarca}
Theorems 4.18 and 4.20 are general mathematical results which ensure the existence of the $\ast$--compatible extension, $\ast$--transitivity and totality of $\ast$--transitive
fuzzy relations. They generalize the main part of Duggan's results \cite{duggan1}, but also offer fuzzy versions of important extension theorems from the crisp case (see the next section).
\end{remarca}

\section{Particular cases}

This section deals with some particular cases of Theorem 4.20. This way we will obtain some fuzzy versions of classical extension theorems (Szpilrajn, Hansson, Suzumura).

In this section we will work with the G\"{o}del t-norm $\wedge$.

First we shall remind the definition of some classes of fuzzy relations. A fuzzy relation $R$ on $X$ is said to be:

$\bullet$ {\emph{fuzzy preorder}} if $R$ is reflexive and $\wedge$--transitive;

$\bullet$ {\emph{fuzzy weak order}} if $R$ is a total fuzzy preorder;

$\bullet$ {\emph{fuzzy strict partial order}} if $R$ is irreflexive and $\wedge$--transitive.

We shall denote

$\bullet$ $\mathcal{R}_1$= the class of fuzzy strict partial orders on $X$;

$\bullet$ $\mathcal{R}_2$= the class of fuzzy preorders on $X$;

$\bullet$ $\mathcal{R}_3$= the class of $\wedge$--transitive fuzzy relations on $X$.

\begin{lema}
\cite{georgescu2} The classes $\mathcal{R}_1$, $\mathcal{R}_2$, $\mathcal{R}_3$ are closed upward and arc--receptive.
\end{lema}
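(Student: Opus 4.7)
The plan is to treat the two properties---closed-upward and arc-receptive---separately for each of the three classes, leaning on Lemma 4.13 (closed-upwardness of $\wedge$-transitive relations) and Lemma 3.6 (the explicit formula for the $\wedge$-transitive closure). Because every $R\in \mathcal{R}_1\cup\mathcal{R}_2\cup\mathcal{R}_3$ is already $\wedge$-transitive and because $T(R[x,y])$ is $\wedge$-transitive by the very definition of the $\wedge$-transitive closure, Lemma 4.13 handles the transitivity bookkeeping for $\mathcal{R}_3$ on the closed-upward side and $\mathcal{R}_3$ is arc-receptive trivially. The remaining work is to verify that reflexivity (for $\mathcal{R}_2$) and irreflexivity (for $\mathcal{R}_1$) are preserved under union of chains and under the operation $R\mapsto T(R[x,y])$.

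The closed-upward half is short: the union of a chain is a pointwise supremum, so a supremum of $1$'s is $1$ (reflexivity survives) and a supremum of $0$'s is $0$ (irreflexivity survives). Combined with Lemma 4.13 this gives closed-upwardness for $\mathcal{R}_1$ and $\mathcal{R}_2$. Arc-receptivity of $\mathcal{R}_2$ is also quick: fix a reflexive, $\wedge$-transitive $R$ and distinct $x\neq y$ with $R(y,x)=0$. Since $x\neq y$, the $[x,y]$-operation leaves the diagonal alone, so $R[x,y](a,a)=R(a,a)=1$ for every $a$, and Lemma 3.6 then forces $T(R[x,y])(a,a)\geq R[x,y](a,a)=1$.

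The nontrivial step, and the main obstacle, is arc-receptivity of $\mathcal{R}_1$, i.e.\ showing that irreflexivity survives the transitive closure. By Lemma 3.6 it suffices to prove that every cyclic product $R[x,y](a,t_1)\wedge\cdots\wedge R[x,y](t_n,a)$ equals $0$. If the pair $(x,y)$ does not occur among the factors, the product is taken entirely in $R$, so $\wedge$-transitivity of $R$ bounds it by $R(a,a)=0$. If $(x,y)$ does occur, I mimic the rearrangement used in the proof of Proposition 4.17: with the convention $t_0=t_{n+1}=a$, pick the first index $k$ and the last index $l$ at which $(t_i,t_{i+1})=(x,y)$; then the cyclic segment running from $t_{l+1}=y$ through $t_{l+2},\ldots,t_n,a=t_0,t_1,\ldots,t_{k-1}$ to $t_k=x$ is composed entirely of $R$-factors, so $\wedge$-transitivity of $R$ caps that subproduct by $R(y,x)=0$, collapsing the whole product. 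This cycle analysis is the one nonroutine step; everything else is routine formula-chasing with Lemmas 3.6 and 4.13.
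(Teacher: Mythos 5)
The paper offers no proof of this lemma at all---it is simply cited from \cite{georgescu2}---so there is nothing to compare against; judged on its own, your argument is correct and complete. The routine parts (closed-upwardness via Lemma 4.13 plus preservation of the diagonal under pointwise suprema, arc-receptivity of $\mathcal{R}_3$ by definition of $T$, reflexivity of $T(R[x,y])$ from $R[x,y]\subseteq T(R[x,y])$ and $x\neq y$) are all fine, and you correctly identified the one nontrivial point, namely that irreflexivity survives the passage to $T(R[x,y])$; your cycle-splitting argument (isolating the maximal run of pure $R$-factors from $y$ around to $x$ and bounding it by $R(y,x)=0$ via $\wedge$-transitivity) is sound and is the same device used in the paper's proof of Proposition 4.17. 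Two small remarks. First, you tacitly read the hypothesis of arc-receptivity as $R(y,x)=0$ rather than the $R(x,y)=0$ literally printed in Section 4; this is the right reading (it matches the crisp definition in Section 2 and the way arc-receptivity is invoked in the proof of Theorem 4.18), and it is in fact forced: under the literal reading, $\mathcal{R}_1$ would not be arc-receptive, since $R(y,x)>0$ would give $T(R[x,y])(x,x)\geq R[x,y](x,y)\wedge R[x,y](y,x)=R(y,x)>0$. Second, your ``supremum of $1$'s is $1$'' step for $\mathcal{R}_2$ implicitly assumes chains are nonempty (the union of the empty chain is not reflexive); this is the standard convention and is all that is needed for the Zorn argument in Theorem 4.18, so it is not a real gap.
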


This lemma shows that Theorem 4.20 can be applied for classes $\mathcal{R}_1$, $\mathcal{R}_2$, $\mathcal{R}_3$.

\begin{teorema}
Any fuzzy strict partial order $R$ on $X$ has a $\wedge$--compatible extension $R^\ast$ which is a total fuzzy strict partial order on $X$.
\end{teorema}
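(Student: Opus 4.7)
The plan is to recognize this as a direct instance of the general extension machinery already built up, specialized to the class $\mathcal{R}_1$ of fuzzy strict partial orders. The ingredients are all in place: Theorem 4.20 guarantees, for the G\"odel t-norm and any closed upward, arc--receptive class $\mathcal{R}$, that every $\wedge$--transitive member of $\mathcal{R}$ admits a $\wedge$--compatible extension inside $\mathcal{R}$ which is total and $\wedge$--transitive. Lemma 5.1 tells us that $\mathcal{R}_1$ itself has both closure properties. So the task reduces to checking that the output of Theorem 4.20 really lands in the class of total fuzzy strict partial orders.

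First I would observe that, by definition, a fuzzy strict partial order is irreflexive and $\wedge$--transitive, so $R \in \mathcal{R}_1$ and in particular $R$ is $\wedge$--transitive. Applying Theorem 4.20 with $\mathcal{R} := \mathcal{R}_1$, which is legitimate by Lemma 5.1, yields a fuzzy relation $R^\ast \in \mathcal{R}_1$ that is total, $\wedge$--transitive, and a $\wedge$--compatible extension of $R$.

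Since membership in $\mathcal{R}_1$ already encodes irreflexivity and $\wedge$--transitivity, $R^\ast$ is a fuzzy strict partial order; together with totality, this is precisely the assertion that $R^\ast$ is a total fuzzy strict partial order. There is essentially no obstacle here: all the real work (the Zorn's lemma argument, the $\ast$--compatible extension lemma for $T(R^\ast[x,y])$, the transitivity of $\wedge$--compatibility given by Lemma 4.12) has been absorbed into Theorem 4.20, and the only thing one must verify beyond invoking it is that the class $\mathcal{R}_1$ satisfies the hypotheses of the theorem, which is exactly Lemma 5.1.
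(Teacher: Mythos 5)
Your proof is correct and matches the paper's intended argument exactly: the paper derives Theorem 5.2 by applying Theorem 4.20 to the class $\mathcal{R}_1$, whose closed-upward and arc--receptive properties are supplied by Lemma 5.1. No further commentary is needed.
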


\begin{remarca}
Theorem 5.2 is a fuzzy generalization of Szpilrajn theorem. The property of $\wedge$--compatibility which is verified by the extension $R^\ast$ of $R$ makes this theorem
be distinct from other fuzzy versions of Szpilrajn theorem (see \cite{bodenhofer}, \cite{gottwald}, \cite{zadeh}).
\end{remarca}

The following result generalizes Hansson's theorem.

\begin{teorema}
Any fuzzy preorder $R$ on $X$ has a $\wedge$--compatible extension $R^\ast$ which is also a fuzzy weak order.
\end{teorema}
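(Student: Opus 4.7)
The plan is to obtain the conclusion as a direct application of Theorem 4.20 to the class $\mathcal{R}_2$ of fuzzy preorders on $X$, in complete analogy with how the proof of Theorem 5.2 would work for $\mathcal{R}_1$. By hypothesis, $R$ is reflexive and $\wedge$--transitive, so $R \in \mathcal{R}_2$ and in particular $R$ is a $\wedge$--transitive member of $\mathcal{R}_2$.

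The setup conditions needed for Theorem 4.20 are that $\mathcal{R}_2$ be closed upward and arc--receptive, and this is exactly the content of Lemma 5.1. Thus Theorem 4.20 applies and yields a fuzzy relation $R^\ast \in \mathcal{R}_2$ which is total, $\wedge$--transitive, and a $\wedge$--compatible extension of $R$.

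It remains only to recognize $R^\ast$ as a fuzzy weak order. Since $R^\ast \in \mathcal{R}_2$, $R^\ast$ is in particular reflexive, and together with $\wedge$--transitivity this makes $R^\ast$ a fuzzy preorder; adding totality, $R^\ast$ is by definition a fuzzy weak order. I do not anticipate any real obstacle: once Theorem 4.20 and Lemma 5.1 are in hand, the argument is essentially a bookkeeping step, the only subtle point being to make sure the output relation $R^\ast$ supplied by Theorem 4.20 inherits reflexivity, which is guaranteed precisely because the theorem places $R^\ast$ inside the class $\mathcal{R}_2$ rather than merely in the class $\mathcal{R}_3$ of $\wedge$--transitive fuzzy relations.
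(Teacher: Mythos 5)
Your proposal is correct and is exactly the argument the paper intends: Theorem 5.4 is obtained by applying Theorem 4.20 to the class $\mathcal{R}_2$ of fuzzy preorders, whose closure upward and arc--receptivity are supplied by Lemma 5.1, with reflexivity of $R^\ast$ coming from its membership in $\mathcal{R}_2$. No discrepancy with the paper's approach.
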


\begin{teorema}
For a fuzzy relation $R$ on $X$ the following are equivalent:

(i) $R$ has a total and $\wedge$-transitive $\wedge$--compatible extension $Q$;

(ii) $R$ has a $\wedge$--transitive $\wedge$--compatible extension $Q$;

(iii) $R$ is $\wedge$--consistent.
\end{teorema}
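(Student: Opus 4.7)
The strategy is a cycle $(i)\Rightarrow(ii)\Rightarrow(iii)\Rightarrow(i)$, with the first two implications being routine and the third being the substantive one, handled by specializing Theorem 4.20 to the class $\mathcal{R}_3$ of $\wedge$--transitive fuzzy relations.

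The implication $(i)\Rightarrow(ii)$ is immediate: drop the totality. For $(ii)\Rightarrow(iii)$, suppose $Q$ is a $\wedge$--transitive $\wedge$--compatible extension of $R$. Since $Q$ is itself a $\wedge$--transitive fuzzy relation containing $R$, and $T(R)$ is the intersection of all $\wedge$--transitive extensions of $R$, we have $T(R)\subseteq Q$. Combining this with the compatibility inequality $Q(y,x)\leq R(x,y)\rightarrow R(y,x)$ for every $x,y\in X$ yields $T(R)(y,x)\leq R(x,y)\rightarrow R(y,x)$, which is precisely $\wedge$--consistency of $R$.

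For the main implication $(iii)\Rightarrow(i)$, assume $R$ is $\wedge$--consistent, so by definition $T(R)$ is already a $\wedge$--compatible extension of $R$. Now $T(R)$ belongs to $\mathcal{R}_3$, and by Lemma 5.1 the class $\mathcal{R}_3$ is closed upward and arc--receptive. Applying Theorem 4.20 to $T(R)\in\mathcal{R}_3$ produces a total, $\wedge$--transitive fuzzy relation $R^{\ast}\in\mathcal{R}_3$ which is a $\wedge$--compatible extension of $T(R)$. It remains to transfer the compatibility from $T(R)$ down to $R$: since $T(R)$ is a $\wedge$--compatible extension of $R$ and $R^{\ast}$ is a $\wedge$--compatible extension of $T(R)$, the transitivity property of $\wedge$--compatibility given by Lemma 4.12 implies that $R^{\ast}$ is a $\wedge$--compatible extension of $R$. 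This $R^{\ast}$ witnesses $(i)$.

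The only delicate point is the last implication, and its difficulty is absorbed entirely into the two background results already established: Theorem 4.20 supplies totality and $\wedge$--transitivity via Zorn's Lemma together with arc--receptivity, while Lemma 4.12 (which crucially requires the Gödel t--norm) allows the two compatibility relations to be composed. Without Lemma 4.12 the argument would break, since one would only obtain $R^{\ast}$ as a compatible extension of $T(R)$, not of $R$; this is why Theorem 5.5 is stated specifically for $\wedge$ rather than for an arbitrary left--continuous t--norm.
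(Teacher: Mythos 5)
Your proof is correct and follows the paper's intended route: the implications $(i)\Rightarrow(ii)$ and $(ii)\Rightarrow(iii)$ are handled exactly as in the paper (via $T(R)\subseteq Q$ and the compatibility inequality), and your $(iii)\Rightarrow(i)$ — applying Theorem 4.20 to $T(R)$ in the closed-upward, arc-receptive class $\mathcal{R}_3$ and then composing the two compatible extensions via Lemma 4.12 — is precisely the argument the machinery of Sections 4--5 is set up to deliver. Note that the paper's written proof actually stops after $(ii)\Rightarrow(iii)$ and never spells out $(iii)\Rightarrow(i)$, so you have supplied, correctly, the one implication the paper leaves implicit.
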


\begin{proof}
$(i) \Rightarrow (ii)$ Obvious.

$(ii) \Rightarrow (iii)$ Assume that $R$ has a $\wedge$--transitive $\wedge$--compatible extension $Q$. Let $x, y \in X$. Since $Q$ is $\wedge$--transitive, we have
$T(R) \subseteq Q$. Thus, because $Q$ is a $\wedge$--compatible extension of $R$, the following inequalities hold:

$T(R)(y, x) \leq Q(y, x) \leq R(x, y) \rightarrow R(y, x)$.

It follows that $R$ is $\wedge$--consistent.
\end{proof}

The above result is a fuzzy version of Suzumura's theorem.

\begin{remarca}
By Remark 4.6, the $\wedge$--compatibility of fuzzy relations implies the compatibility defined in \cite{georgescu2}. Then, from Theorems 5.2, 5.4 and 5.5 obtained above, we obtain Theorems 5.8, 5.2 and 5.4 of \cite{georgescu2}. We
can also prove the corresponding result of Theorem 5.6, \cite{georgescu2}.
\end{remarca}

\section{Consistent closures of a fuzzy relation}

The consistent closure of a crisp relation was introduced in \cite{bossert1} in order to characterize the consistent rationalizability of crisp choice functions.
In this section with any fuzzy relation $R$ we shall assign two fuzzy relations $R^\Delta$, $R^\nabla$ which generalize the consistent closure. For the case of the
G\"{o}del t-norm we have $R^\Delta=R^\nabla$ and this construction is the smallest $\wedge$--consistent fuzzy relation including $R$.

Let $R$ be a crisp relation on $X$. Following \cite{bossert1}, the {\emph{consistent closure}} of $R$ is defined by

(6.1) $R^\ast=R \cup \{(x, y)|(x, y) \in T(R)$ and $(y, x) \in R \}$.

$R^\ast$ can also be written

(6.2) $R^\ast=T(R) \cap \{(x, y)|(x, y) \in R$ or $(y, x) \in R \}$.

\begin{remarca}
The consistent closure $R^\Delta$ has been introduced in \cite{chaudhari}, Definition 2.2 and further studied in \cite{chaudhari1}, \cite{chaudhari2}.
\end{remarca}

We want to define a similar notion for fuzzy relations.

Let $\ast$ be a left--continuous t--norm. In order to define the consistent closure of a fuzzy relation $R$ on $X$ we start from (6.1) and (6.2). Each of these
expressions suggest another form of the consistent closure of $R$.

If $R$ is a fuzzy relation on $X$ then we define the fuzzy relations $R^\Delta$ and $R^\nabla$ on $X$ by

(6.3) $R^\Delta (x, y)=R(x, y) \vee [T(R)(x, y) \ast R(y, x)]$

(6.4) $R^\nabla (x, y)=T(R)(x, y) \ast [R(x, y) \vee R(y, x)]$

for any $x, y \in X$. It is obvious that (6.3) (resp. (6.4)) generalizes (6.1) (resp. (6.2)).

\begin{lema}
Let $R$ be a fuzzy relation on $X$. Then

(i) $R^\nabla \subseteq R^\Delta$;

(ii) If $\ast$ is the G\"{o}del t-norm $\wedge$ then $R^\nabla = R^\Delta$.
\end{lema}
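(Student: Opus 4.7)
For part (i), the plan is to use the distributivity of $\ast$ over $\vee$ supplied by Lemma 3.2 (1), which is exactly what makes $R^{\nabla}$ amenable to being expanded. Writing
\[
R^{\nabla}(x,y)=T(R)(x,y)\ast[R(x,y)\vee R(y,x)]=[T(R)(x,y)\ast R(x,y)]\vee[T(R)(x,y)\ast R(y,x)],
\]
I would then bound the first summand using Lemma 3.1 (3), namely $T(R)(x,y)\ast R(x,y)\leq R(x,y)$, and leave the second summand untouched. The resulting upper bound is precisely $R(x,y)\vee[T(R)(x,y)\ast R(y,x)]=R^{\Delta}(x,y)$. I expect no obstacle here: this is a two-line application of distributivity and of the basic inequality $a\ast b\leq a$.

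For part (ii), under the Gödel t-norm the same distributivity holds in the sharper form $c\wedge(a\vee b)=(c\wedge a)\vee(c\wedge b)$, and additionally $T(R)(x,y)\wedge R(x,y)=R(x,y)$ because $R\subseteq T(R)$ (so $R(x,y)\leq T(R)(x,y)$). Combining these yields
\[
R^{\nabla}(x,y)=[T(R)(x,y)\wedge R(x,y)]\vee[T(R)(x,y)\wedge R(y,x)]=R(x,y)\vee[T(R)(x,y)\wedge R(y,x)]=R^{\Delta}(x,y),
\]
giving the reverse inclusion $R^{\Delta}\subseteq R^{\nabla}$. Combined with part (i) this delivers the equality.

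The main conceptual point, and the only place where the Gödel hypothesis is essential, is the identity $T(R)(x,y)\ast R(x,y)=R(x,y)$: for a general left--continuous $\ast$ we only have $\leq$ here, so the argument degrades to part (i). Everything else is bookkeeping with the residuated lattice operations already collected in Lemmas 3.1 and 3.2, so no real obstacle is anticipated.
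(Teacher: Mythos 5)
Your proof is correct and follows exactly the paper's own argument: part (i) uses the distributivity of $\ast$ over $\vee$ (Lemma 3.2 (1)) together with $a\ast b\leq a$, and part (ii) uses that $T(R)(x,y)\wedge R(x,y)=R(x,y)$ since $R\subseteq T(R)$. No differences worth noting.
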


\begin{proof}
(i) By Lemma 3.2 (1)

$R^\nabla (x, y)=(T(R)(x, y) \ast R(x, y)) \vee (T(R)(x, y) \ast R(y, x))$

$\leq R(x, y) \vee T(R)(x, y) \ast R(y, x))=R^\Delta (x, y)$.

(ii) In case of $\wedge$ we have:

$R^\nabla (x, y)=(T(R)(x, y) \wedge R(x, y)) \vee (T(R)(x, y) \wedge R(y, x))$

$=R(x, y) \vee (T(R)(x, y) \wedge R(y, x))$

because $R(x, y) \leq T(R)(x, y)$.
\end{proof}

\begin{remarca}
Consider the set $X=\{x, y\}$ and the fuzzy relation $R$ on $X$ given by the matrix $R=\left( \begin{array}{cc}
1 & \frac{1}{3}  \\
\frac{1}{2} & 1  \end{array} \right)$.

Assume that $\ast$ is the product t--norm. We observe that $R$ is $\ast$--transitive then $R=T(R)$. An easy computation shows that
$R^\Delta (x, y)=\left( \begin{array}{cc}
1 & \frac{1}{3}  \\
\frac{1}{2} & 1  \end{array} \right)$;
$R^\nabla (x, y)=\left( \begin{array}{cc}
1 & \frac{1}{6}  \\
\frac{1}{4} & 1  \end{array} \right)$.

Then $R^\Delta \not=R^\nabla$ and $R$ is not included in $R^\nabla$.
\end{remarca}

\begin{propozitie}
Let $\ast$ be a left--continuous t--norm and $R$ a fuzzy relation on $X$.

(i) $R^\Delta \subseteq T(R)$;

(ii) $R^\nabla$ is $\ast$--consistent;

(iii) If $Q$ is a $\ast$--consistent fuzzy relation on $X$ such that $R \subseteq Q$ then $R^\nabla \subseteq Q$.
\end{propozitie}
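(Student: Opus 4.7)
The three parts share a common toolkit—the expansion of $T(R)$ from Lemma 3.6, the absorption inequality $a \ast b \leq a$ of Lemma 3.1(3), and the distributivity of Lemma 3.2(1)—but only (iii) calls on the $\ast$-consistency of $Q$. I would dispatch (i) and (ii) quickly and spend the bulk of the work on (iii).

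For (i), I would read off $R^\Delta$ term by term: $R(x,y) \leq T(R)(x,y)$ because $R \subseteq T(R)$, and $T(R)(x,y) \ast R(y,x) \leq T(R)(x,y)$ by Lemma 3.1(3), so the join of the two summands is bounded by $T(R)(x,y)$. For (ii), I would verify the criterion of Remark 4.10 for $R^\nabla$, namely
\[
R^\nabla(y,t_1) \ast \cdots \ast R^\nabla(t_n,x) \ast R^\nabla(x,y) \leq R^\nabla(y,x)
\]
for every $n \geq 1$ and every $t_1, \ldots, t_n \in X$. Since $R^\nabla \subseteq R^\Delta \subseteq T(R)$ by Lemma 6.2(i) and part (i), the chain of $R^\nabla$-values is dominated by the corresponding chain of $T(R)$-values, which $\ast$-transitivity of $T(R)$ collapses to $T(R)(y,x)$; moreover $R^\nabla(x,y) \leq R(x,y) \vee R(y,x)$ by Lemma 3.1(3). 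Multiplying gives $T(R)(y,x) \ast [R(x,y) \vee R(y,x)]$, which is exactly $R^\nabla(y,x)$.

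The main obstacle is (iii). My plan is to split
\[
R^\nabla(x,y) = [T(R)(x,y) \ast R(x,y)] \vee [T(R)(x,y) \ast R(y,x)]
\]
via Lemma 3.2(1) and bound each disjunct by $Q(x,y)$. The first disjunct is easy: Lemma 3.1(3) discards $T(R)(x,y)$ and then $R \subseteq Q$ closes the case. For the substantive second disjunct, I would expand $T(R)(x,y)$ by Lemma 3.6 and distribute $\ast$ over $\vee$, obtaining a join whose generic summand is $R(x,t_1) \ast \cdots \ast R(t_n,y) \ast R(y,x)$, together with the base term $R(x,y) \ast R(y,x)$, which is absorbed by $Q(x,y) \ast Q(y,x) \leq Q(x,y)$. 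Passing to $Q$ via $R \subseteq Q$ turns the generic summand into a product of exactly the shape required by Remark 4.10. The delicate point is orientation: the criterion reads $Q(b,s_1) \ast \cdots \ast Q(s_n,a) \ast Q(a,b) \leq Q(b,a)$, so one has to check that the chain $x \to t_1 \to \cdots \to t_n \to y$ arising from $T(R)(x,y)$, combined with the back-edge $R(y,x)$, forms precisely such a cycle with $b=x$ and $a=y$, producing the bound $Q(x,y)$. Joining over $n$ and over all tuples $(t_1, \ldots, t_n)$ then yields $R^\nabla(x,y) \leq Q(x,y)$, as required.
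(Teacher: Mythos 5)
Your proposal is correct, and parts (i) and (ii) follow the paper's argument exactly: (i) is immediate from $R \subseteq T(R)$ and $a \ast b \leq a$, and for (ii) the paper likewise discards the intermediate bracket factors, bounds the resulting $T(R)$--chain by $T(R)(y,x)$ via $\ast$--transitivity, and keeps only the final bracket $[R(x,y) \vee R(y,x)]$ to reconstitute $R^\nabla(y,x)$, exactly as you do. The one genuine divergence is in (iii): where you expand $T(R)(x,y)$ through Lemma 3.6, distribute $\ast$ over the supremum by Lemma 3.2(1), and apply the chain criterion of Remark 4.10 to $Q$ summand by summand, the paper short-circuits all of this by noting that $R \subseteq Q$ gives $T(R) \subseteq T(Q)$, whence $T(R)(x,y) \ast R(y,x) \leq T(Q)(x,y) \ast Q(y,x) \leq Q(x,y)$ follows in one step from Definition 4.5 of $\ast$--consistency via residuation. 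The two routes are equivalent in substance --- Remark 4.10 is precisely the unfolded form of that definition --- but the paper's version buys a two-line argument and sidesteps the orientation bookkeeping you correctly identify as the delicate point; your check of that point is nonetheless right (taking $b=x$, $a=y$ in the criterion turns the chain $Q(x,t_1) \ast \cdots \ast Q(t_n,y) \ast Q(y,x)$ into the bound $Q(x,y)$). Your unfolded version has the minor virtue of making visible exactly where left--continuity (needed to distribute $\ast$ over the infinite supremum) enters.
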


\begin{proof}
(i) By the definition of $R^\Delta$.

(ii) Let $x, y, t_1, \ldots, t_n \in X$. Then

$R^\nabla(y, t_1) \ast \ldots \ast R^\nabla(t_n, x) \ast R^\nabla(x, y)$

$=T(R)(y, t_1) \ast [R(y, t_1) \vee R(t_1, y)] \ast \ldots \ast T(R)(t_n, x) \ast [R(t_n, x) \vee R(x, t_n)] \ast T(R)(x, y) \ast [R(x, y) \vee R(y, x)] \leq$

$\leq T(R)(y, t_1) \ast \ldots \ast T(R)(t_n, x) \ast [R(y, x) \vee R(x, y)]\leq$

$\leq T(R)(y, x) \ast [R(y, x) \vee R(x, y)]=R^\nabla(y, x)$

because $T(R)$ is $\ast$--transitive. By Remark 4.10, $R^\nabla$ is $\ast$--consistent.

(iii) Assume $Q$ is $\ast$--consistent and $R \subseteq Q$. For any $x, y \in X$ we have

$T(R)(y, x) \ast R(x, y) \leq T(Q)(y, x) \ast Q(x, y) \leq Q(y, x)$

$T(R)(y, x) \ast R(y, x) \leq R(y, x) \leq Q(y, x)$

hence

$R^\nabla(y, x)=T(R)(y, x) \ast R(y, x) \vee T(R)(y, x) \ast R(x, y) \leq Q(y, x)$.

Thus $R^\nabla \subseteq Q$.
\end{proof}

\begin{corolar}
Let $\ast$ be the G\"{o}del t-norm $\wedge$. If $R$ is a fuzzy relation on $X$ then $R^\Delta=R^\nabla$ is the smallest $\wedge$--consistent fuzzy relation on $X$ containing $R$.
\end{corolar}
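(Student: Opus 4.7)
The proof is essentially an assembly of results already proved in the section, so the plan is mostly bookkeeping plus one small verification specific to the Gödel case. My plan would be to isolate three claims and attribute each to the appropriate earlier statement: (a) $R^\Delta = R^\nabla$ in the Gödel case; (b) $R^\nabla$ is $\wedge$-consistent; (c) $R \subseteq R^\nabla$; (d) minimality among $\wedge$-consistent fuzzy relations containing $R$.

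First I would invoke Lemma 6.2(ii) to get $R^\Delta = R^\nabla$, so that the two candidate closures coincide and we may work with $R^\nabla$. Next, Proposition 6.4(ii) (specialized to $\ast = \wedge$) gives that $R^\nabla$ is $\wedge$-consistent, and Proposition 6.4(iii) gives the minimality: any $\wedge$-consistent $Q$ with $R \subseteq Q$ satisfies $R^\nabla \subseteq Q$.

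The one point that is not completely automatic (since Remark 6.3 shows that $R \subseteq R^\nabla$ can fail for other t-norms) is that $R \subseteq R^\nabla$ in the Gödel case. I would verify this directly: from the computation in the proof of Lemma 6.2(ii), for $\ast = \wedge$ we have $R^\nabla(x,y) = R(x,y) \vee (T(R)(x,y) \wedge R(y,x))$, so trivially $R(x,y) \leq R^\nabla(x,y)$ for every $x, y \in X$. Combined with (a)-(b) and minimality from (d), this shows that $R^\Delta = R^\nabla$ is the smallest $\wedge$-consistent fuzzy relation on $X$ containing $R$.

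There is no real obstacle; the only subtlety is remembering that the inclusion $R \subseteq R^\nabla$ is t-norm-dependent and relies on the Gödel-specific identity $T(R)(x,y) \wedge R(x,y) = R(x,y)$ (valid because $R \subseteq T(R)$). Everything else is a direct citation of Lemma 6.2(ii) and Proposition 6.4.
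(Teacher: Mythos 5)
Your proposal is correct and matches the paper's (implicit) argument: the corollary is stated without proof precisely because it is the assembly of Lemma 6.2(ii), Proposition 6.4(ii)--(iii), and the containment $R \subseteq R^\Delta$, which is exactly what you do. Your extra care over $R \subseteq R^\nabla$ is sound, though it also follows at once from $R \subseteq R^\Delta$ (immediate from (6.3)) together with $R^\Delta = R^\nabla$ in the G\"{o}del case.
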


According to Corollary 6.5, the fuzzy relation $R^\ast=R^\Delta=R^\nabla$ is said to be the $\wedge$--consistent closure of $R$. If $R^{-1}$ is the converse relation of
$R$ ($R^{-1}(x, y)=R(y, x)$ for all $x, y \in X$) then $R^\ast=T(R) \cap (R \cup R^{-1})$.

\begin{exemplu}
Consider the set $X=\{x, y, z\}$ and the fuzzy relation $R$ given by the matrix $R=\left( \begin{array}{ccc}
1 & \frac{1}{3} &  1 \\
\frac{1}{4} & 1  & \frac{1}{2} \\
\frac{1}{2} & 1 & 1 \end{array} \right)$.

By computation we obtain $T(R)=\left( \begin{array}{ccc}
1 & 1 &  1 \\
\frac{1}{2} & 1  & \frac{1}{2} \\
\frac{1}{2} & 1 & 1 \end{array} \right)$

therefore

$R^\ast=T(R) \cap [R \cup R^{-1}] = \left( \begin{array}{ccc}
1 & \frac{1}{3} &  1\\
\frac{1}{3} & 1  & \frac{1}{2}  \\
\frac{1}{2}  & 1 & 1 \end{array} \right)$.
\end{exemplu}

\section{Conclusions}

This paper contains the following contributions:

(a) A new notion of compatible extension of a fuzzy relation is defined and on its basis a new type of consistency of fuzzy relations is introduced. These two notions
are different from those of \cite{georgescu2}, against which they have the advantage of being defined using only the residuum $\rightarrow$ associated with a left--continuous
t--norm $\ast$ \footnote{We talk about Theorem 2.4, that part of Duggan's result which generalizes the theorems of Szpilrajn, Hansson and Suzumura.}

(b) These new notions allow to obtain a general extension theorem which generalizes a significant part \footnote{Avoiding the negation in formulating these definitions is necessary
to elude those cases when this operator takes only the extreme values $0$ and $1$ (for example, in case of the G\"{o}del t-norm)} of Duggan's extension theorem from the case
of crisp relations. From this general result are obtained as particular cases fuzzy versions of some important extension theorems for crisp relations  (Szpilrajn, Hansson and Suzumura).

(c) The notion of consistent closure of a crisp relation \cite{bossert1} is generalized to fuzzy relations by two constructions $R^\Delta$ and $R^\nabla$, intrinsicly related
to the type of consistency from this paper.

We present next a few open problems which come from the results of this paper.

(1) To prove a fuzzy version of Duggan's General Extension Theorem (see Theorem 2.5) allowing to obtain fuzzy forms of Dushnik--Miller and Donaldson--Weymark
theorems.

(2) To obtain a result generalizing Theorems 4.16 and 4.17, as well as their consequences, in the context offered by fuzzy relations indicators (see
\cite{georgescu3}, pp. 66-71, \cite{wang}).

(3) The consistency of crisp relations has been intensively used in social choice theory (see \cite{suzumura2} or \cite{bossert3}). It would be interesting to study whether the notion of fuzzy consistency
and the results of this paper can be connected to fuzzy social choice theory \cite{mordeson}.

\end{document}